\documentclass{article}



\PassOptionsToPackage{%
  colorlinks=true,%
  pdfborder={0 0 0}
}{hyperref}


\usepackage[final]{neurips_2025}


\newcommand{\ourtitle}{Flatness is Necessary, Neural Collapse is Not: Rethinking Generalization via Grokking}

\usepackage{microtype}
\usepackage{graphicx}
\usepackage{subfigure}
\usepackage{booktabs} 
\usepackage{amsmath, amssymb, amsthm}
\usepackage{mathtools}

\usepackage[textsize=tiny]{todonotes}

\usepackage{xcolor}\definecolor{darkblue}{rgb}{0.0,0,0.75} 
\usepackage{hyperref} 
\hypersetup{ urlcolor = darkblue, 
linkcolor = darkblue, 
citecolor = darkblue, 
pdftitle=\ourtitle, 
pdfauthor=anonymous, 
breaklinks 
}

\usepackage[capitalize,noabbrev]{cleveref}

\usepackage{url}

\usepackage{natbib} 
\usepackage{caption} 
\usepackage{wrapfig}
\usepackage{diagbox}
\usepackage{float}
\usepackage{xfrac}
\usepackage{wrapfig}

\usepackage{soul} 
\usepackage{thm-restate}

\usepackage{latexsym}

\usepackage{wrapfig}
\usepackage{lipsum}

\theoremstyle{plain}
\newtheorem{theorem}{Theorem}[section]
\newtheorem{proposition}[theorem]{Proposition}
\newtheorem{lemma}[theorem]{Lemma}

\theoremstyle{definition}
\newtheorem{definition}[theorem]{Definition}

\newtheorem{assumptions}[theorem]{Assumptions}
\theoremstyle{remark}
\newtheorem{remark}[theorem]{Remark}



\newcommand{\RR}{\mathbb{R} }

\newcommand{\Dcal}{\mathcal{D}}
\newcommand{\Ecal}{\mathcal{E}}

\newcommand{\Xcal}{\mathcal{X}}
\newcommand{\Ycal}{\mathcal{Y}}

\newcommand{\w}{\mathbf{w}}



%

\title{\ourtitle}

\author{%
  Ting Han\\
  Lamarr Institute, TU Dortmund, Germany\\
  and Institute for AI in Medicine, UK Essen\\
  \texttt{ting.han@tu-dortmund.de} \\
  \And
  Linara Adilova\\
  Research Center Trustworthy Data Science\\
  and Security of the University Alliance Ruhr,\\
  TU Dortmund, Germany\\
  \And
  Henning Petzka\\
  Ruhr University Bochum, Germany\\
  \And
  Jens Kleesiek\\
  Institute for AI in Medicine, UK Essen\\
  and Department of Physics, TU Dortmund, Germany \\
  \And
  Michael Kamp\\
  Lamarr Institute, TU Dortmund, Germany\\
  and Institute for AI in Medicine, UK Essen\\
  \texttt{michael.kamp@tu-dortmund.de} \\
}

\begin{document}
\maketitle

\begin{abstract}
Neural collapse, i.e., the emergence of highly symmetric, class-wise clustered representations, is frequently observed in deep networks and is often assumed to reflect or enable generalization. In parallel, flatness of the loss landscape has been theoretically and empirically linked to generalization. Yet, the causal role of either phenomenon remains unclear: Are they prerequisites for generalization, or merely by-products of training dynamics?
We disentangle these questions using grokking, a training regime in which memorization precedes generalization, allowing us to temporally separate generalization from training dynamics and we find that while both neural collapse and relative flatness emerge near the onset of generalization, only flatness consistently predicts it. Models encouraged to collapse or prevented from collapsing generalize equally well, whereas models regularized away from flat solutions exhibit delayed generalization, resembling grokking, even in architectures and datasets where it does not typically occur. Furthermore, we show theoretically that neural collapse leads to relative flatness under classical assumptions, explaining their empirical co-occurrence.
Our results support the view that relative flatness is a potentially necessary and more fundamental property for generalization, and demonstrate how grokking can serve as a powerful probe for isolating its geometric underpinnings.\footnote{Code implementation: \href{https://github.com/TrustworthyMachineLearning-Lab/grokking_flatness}{https://github.com/TrustworthyMachineLearning-Lab/grokking\_flatness}.}
\end{abstract}

\section{Introduction}
Overparameterized neural networks continue to challenge classical learning theory. Despite their capacity to memorize arbitrary labels~\citep{zhang2016understanding}, they often generalize well on natural data and, in some cases, only begin to generalize after complete memorization, a phenomenon known as grokking~\citep{power2022grokking}. This apparent contradiction has motivated a renewed interest in identifying geometric signatures of generalization. Two candidates have emerged as particularly prominent: the flatness of the loss landscape~\citep{keskarLarge, jiang2020fantastic, petzka2021relative} and the neural collapse (NC) phenomenon~\citep{papyan2020prevalence,mixon2022neural, zhou2022all}. Both tend to appear late in training and are frequently associated with good generalization, yet their precise causal roles remain unclear.

In this paper, we challenge the common conjecture that NC is necessary for generalization~\citep{mixon2022neural, zhou2022all, sukenik2023deep, zhu2021geometric}. Our key idea is to leverage grokking as a unique observational window: since generalization emerges only after prolonged memorization, it enables a clean separation between training dynamics and generalization. This allows us to ask: do NC and flatness merely correlate with generalization, or do they contribute to it?

We measure both phenomena in the penultimate layer. For flatness, we adopt the relative flatness metric proposed by~\citet{petzka2021relative}, which considers the Hessian trace normalized by the weight norm, a quantity theoretically and empirically shown to align with generalization. To make this computable in large state-of-the-art neural networks, we employ the alternative closed-form upper bound introduced by~\citep{walter2024uncanny}, which is valid in the penultimate layer under cross-entropy loss. For NC, we measure the class-wise clustering of penultimate-layer representations through empirical variance and mean properties, tracking its emergence throughout training.
In grokking experiments, we find that flatness emerges alongside generalization, never before. NC, in contrast, appears earlier during memorization, suggesting that it can lead to, but is not neccessary for generalization (cf. Fig.~\ref{fig:grok_nc_flatness}).
    
\begin{wrapfigure}{r}{0.58\textwidth}
    \vskip 0.2in
    \begin{center}
    \centerline{\includegraphics[width=0.57\columnwidth]{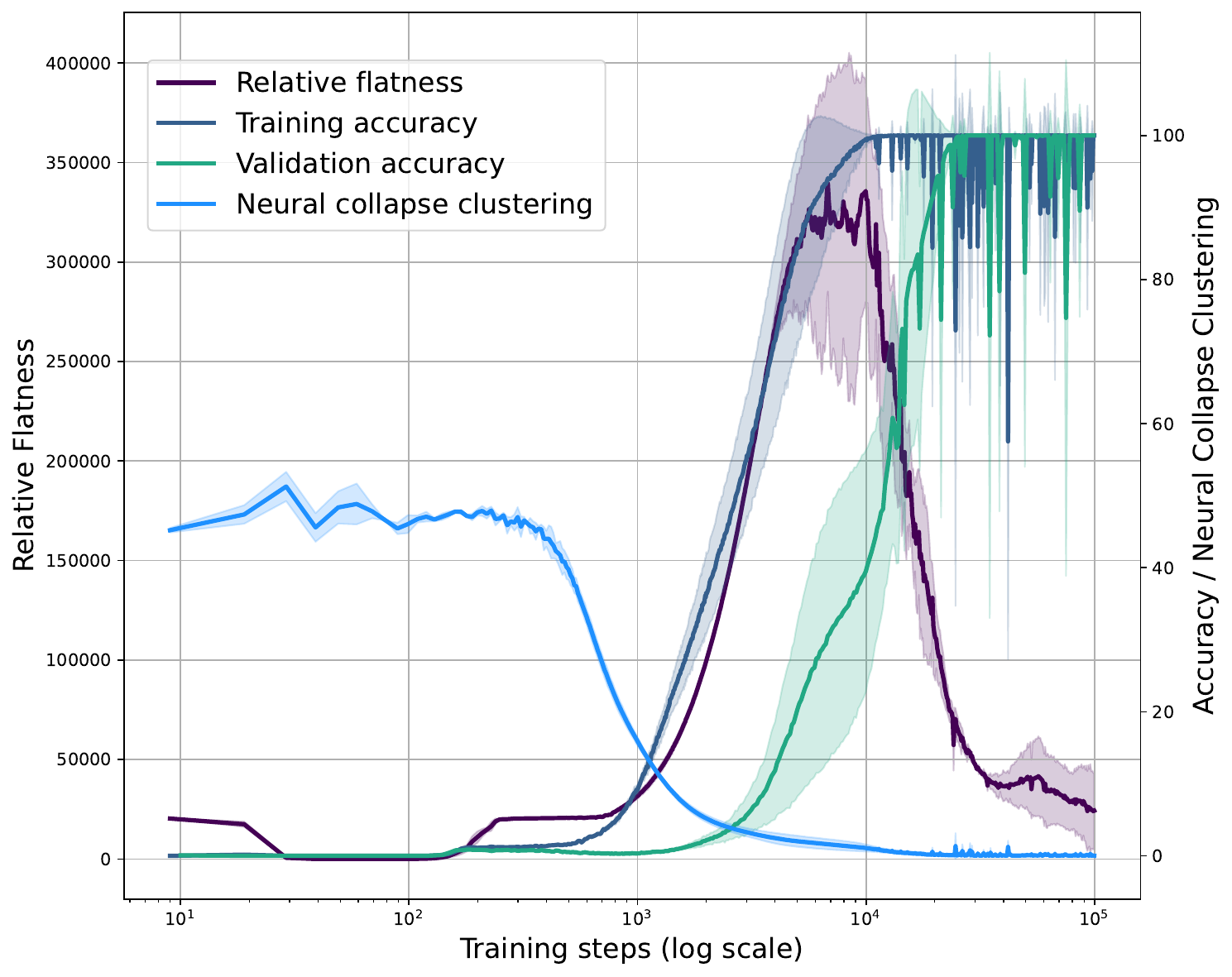}}
    \caption{Neural collapse clustering and relative flatness in grokking. While both correlate with generalization, neural collapse emerges early during memorization, whereas flatness only drops sharply when generalization begins, highlighting flatness as a better indicator of generalization onset.}
    \label{fig:grok_nc_flatness}
    \end{center}
    \vskip -0.2in
\end{wrapfigure}

To isolate their functional roles, we perturb training dynamics. When we suppress collapse-style clustering, networks still generalize robustly; test accuracy remains high despite high clustering values. Flatness, however, is unaffected. In contrast, regularizing against flatness, i.e., encouraging sharp solutions, reliably delays generalization and induces grokking-like behavior even in settings where it does not typically arise, including ResNets on CIFAR-10, ViTs on ImageNet and various pre-trained language models on SST-5. These findings point to flatness as a more fundamental driver of generalization. To explain the frequent co-occurrence of NC and flatness, we provide a theoretical result: under classical NC assumptions, collapse \emph{leads to} relative flatness. This unifies previously observed correlations under a single geometric framework, but also shows that flatness can arise via other mechanisms. NC is thus a sufficient but non-exclusive path toward flatness. Finally, we revisit the theoretical framework of~\citet{petzka2021relative}, which asserts that generalization requires not only flatness, but also \emph{representativeness}, i.e., the alignment between learned features and the true data distribution. Our experiments confirm this dependency. Representativeness, when approximated, improves only as generalization sets in, highlighting that neither flatness nor collapse alone ensures generalization without meaningful feature alignment.

Together, these results place NC in a new light. While it often accompanies generalization, it is not a prerequisite. Instead, it may serve as an inductive bias toward flatness under standard training dynamics. Relative flatness, by contrast, tends to be empirically necessary and theoretically justified, especially when labels are locally constant and features are representative---assumptions which appear to hold in our experimental settings.

Our work provides new insight into the geometry of generalization:
\begin{itemize}
    \item We show empirically that relative flatness, but not neural collapse, is potentially necessary for generalization;
    \item we prove that neural collapse leads to relative flatness under classical assumptions;
    \item we demonstrate that grokking-like behavior can be actively induced by manipulating flatness, even in real-world architectures.
\end{itemize}
These insights challenge prevailing narratives about generalization in deep learning and suggest new levers for training, regularization, and model diagnostics.

\section{Related Work}

Despite substantial empirical success, the mechanisms driving generalization in deep neural networks remain poorly understood. 
Two pronounced geometric perspectives, \emph{neural collapse} and \emph{loss surface flatness}, have emerged as phenomena empirically correlated with generalization, but their interrelation and causal roles remain ambiguous.
In this work, we leverage the properties of grokking to disentangle necessity from sufficiency and to assess which of these phenomena are required for generalization.

\paragraph{Neural Collapse and Generalization}
\emph{Neural collapse} (NC) describes a geometric convergence during training a classifier where penultimate layer features from the same class collapse to a single mean, the means form a simplex equiangular tight frame, and classifier weights align accordingly~\citep{papyan2020prevalence}.
This phenomenon has been widely observed at late training stages in overparameterized networks~\citep{hanneural, graf2021dissecting, rangamani2022neural, wu2024linguistic, zhouall} and has become a widely accepted indicator for generalization and robustness.
Under idealistic assumption of an unconstrained features model, i.e., a model which optimizes features as free variables, NC has been shown to be the global optimal solution for various loss functions~\citep{mixon2022neural, zhou2022all, sukenik2023deep, zhu2021geometric}.
Similar results for compatibility of zero test error and collapse of variances (so only NC1 condition) were shown for mean-field networks~\citep{wu2025neural}.
Practical applications have leveraged NC properties for downstream performance. 
For instance, \citet{wu2025pursuing} use a feature separation loss based on NC to improve out-of-distribution detection.
Similarly, \citet{munn2024the} show that lower geometric complexity in pre-trained representations fosters NC and leads to better few-shot generalization in transfer learning.
\citet{galanti2021role} shows theoretically and empirically that NC is the reason for transfer learning to work better.
These findings suggest that NC may be \emph{sufficient} for generalization in certain regimes.
However, the open question remains: is NC \emph{necessary} outside of the unconstrained features models?
For example, NC is a debated matter in the context of inbalanced learning, where it was observed that minority collapse can happen, i.e., the means of the minority classes merge, thus preventing classification~\citep{fang2021exploring}.
At the same moment it is possible to induce NC by enforcing it through training, even in the unbalanced data setup~\citep{yang2022inducing}.
For deep linear networks it was also theoretically shown that the global optimum exhibits NC, but even more: Under unbalanced data, NC gets means proportional to the class size~\citep{dang2023neural}.
\citet{hui2022limitations} argues for measuring NC on the test set for understanding generalization properties and demonstrates that test set NC is hurting generalization performance on related downstream tasks, i.e., representations with strong NC on the test set are bad for transfer learning.
Moreover, \citet{hong2024beyond} show that a sufficiently high signal-to-noise ratio is a necessary condition for NC to correlate with good generalization, explaining why strong NC can otherwise coincide with poor transfer performance.

Overall, the existing research suggests that NC is common, but might not be required for generalization: a network can generalize without NC and some setups (like noisy labels) can break the link between generalization and NC.
Our work addresses this question directly by constructing settings in which models generalize without exhibiting NC, demonstrating that it is not a prerequisite for generalization.

\paragraph{Flatness and Generalization}
The connection between loss surface flatness and generalization has long been suspected~\citep{hochreiter1997flat}, and more recently has been supported by empirical studies showing that flatness-based measures often outperform alternatives such as norm-, margin- or optimization-based metrics \citep{jiang2020fantastic, keskarLarge}.
However, classical flatness measures, typically based on the Hessian of the loss with respect to parameters, are known to be sensitive to reparameterizations that leave the function and generalization behavior unchanged~\citep{dinh2017sharp}, and computationally expensive for large state-of-the-art models.
To resolve this, \citet{petzka2021relative} introduced a reparameterization-invariant measure of \emph{relative flatness}, grounded in a theoretical framework that connects flatness of an individual layer to generalization under assumptions of representative data and locally constant labels.
This formulation not only explains prior empirical correlations, but also provides theoretically grounded conditions under which flatness predicts generalization.
\citet{walter2024uncanny} further made this measure more practical by deriving closed-form computation formulas in standard classification settings.
In our work, we empirically verify that relative flatness drops precisely when generalization emerges during grokking.
Moreover, we demonstrate that promoting relative flatness can actively \emph{induce} generalization even in the absence of NC.
In doing so, we confirm and extend the theoretical framework of \citet{petzka2021relative}, while disentangling flatness from NC.

\paragraph{Grokking as a Window into Generalization}
\emph{Grokking} refers to the sudden emergence of generalization long after perfect training performance is achieved, as first described by \citet{power2022grokking}.
This phase transition, which is often induced by implicit or explicit regularization, offers a unique opportunity to isolate potential causes for the emergence of generalization.
Recent work explains grokking through two-phase dynamics: an initial kernel regime focused on memorization, followed by feature learning that supports generalization~\citep{mohamadi2023grokking, lyu2024dichotomy, kumar2024grokking}.
Analytical studies reveal closed-form grokking solutions in modular arithmetic tasks~\citep{gromov2024a, JMLR:v25:22-1228}, while mechanistic studies identify sparse and dense subnetworks competing during training~\citep{nanda2023progress, gouki2024grokking}. 
Most of these studies focus on shallow networks or toy tasks.
However, recent work shows that grokking also arises in deeper architectures and real-world datasets~\citep{murty-etal-2023-grokking, humayun2024deep}, suggesting greater relevance.
We build on this foundation by using grokking as an experimental lens: By tracking the timing of flatness, NC, and generalization, we are able to distinguish which factors are causally connected and which are merely correlated.

\section{Geometric Foundations of Generalization}
\label{sec:preliminaries}
In this work, we investigate the interplay between generalization and two notable phenomena in modern deep learning: \emph{neural collapse}, and \emph{flatness} of the loss surface. These concepts are frequently observed to co-occur late in training, but their individual roles and causal relations remain poorly understood. Our goal is to disentangle their contributions to model performance by leveraging the grokking setting, in which memorization precedes generalization, allowing us to observe the emergence of each phenomenon in isolation.

\paragraph{Generalization}
A central objective in machine learning is to ensure that a trained model not only performs well on the training data, but also generalizes to unseen data drawn from the same distribution. This difference in performance is formalized via the \emph{generalization gap}. Let $f : \Xcal \rightarrow \Ycal$ be a model from a model class $\mathcal{F}$ trained with a twice-differentiable loss function $\ell : \Ycal \times \Ycal \rightarrow \mathbb{R}_+$ on a finite training set $S \subseteq \Xcal \times \Ycal$, drawn i.i.d. from a data distribution $\mathcal{D}$ over the input space $\mathcal{X}$ and output space $\mathcal{Y}$. The generalization gap is defined as
$\Ecal_{\mathrm{gen}}(f, S) := \Ecal(f) - \Ecal_{\mathrm{emp}}(f, S),$
where $\Ecal(f) := \mathbb{E}_{(x,y) \sim \Dcal} \left[ \ell(f(x), y) \right]$ is the risk and $\Ecal_{\mathrm{emp}}(f, S) := \frac{1}{|S|} \sum_{(x,y) \in S} \ell(f(x), y)$ is the empirical risk.
The model generalizes well when $\Ecal_{\mathrm{gen}}(f, S)$ is small, indicating close alignment between training and test performance.

\paragraph{Neural Collapse and the NCC Measure}
\emph{Neural collapse} (NC) is a geometric phenomenon characterizing the late stages of training in deep classification networks~\citep{papyan2020prevalence, mixon2022neural}.
It manifests as an alignment of the learned representations such that: (1) within each class, features in the penultimate layer collapse to their class mean; (2) the class means themselves form a simplex equiangular tight frame, a maximally spaced configuration; (3) the classifier weights align with these class means; and (4) the classifier effectively becomes a nearest-neighbor model.

This structure has been theoretically motivated and empirically observed in both training and unseen test samples, including new classes, particularly in transfer learning settings~\citep{galanti2021role}. To quantify the emergence of NC, we adopt the simplified Neural Collapse Clustering (NCC) measure proposed by \citet{galanti2021role}, which captures the essential characteristics of neural collapse, i.e., tight intra-class clustering and strong inter-class separation, without requiring all four formal NC conditions. This is the measure we use throughout our analysis.
\begin{definition}[NCC Measure]\label{def:ncc}
Let $\phi(x)$ denote the penultimate-layer representation of input $x$, and $D_c$ the set of training samples from class $c$. Then
\begin{equation*}
\mathrm{NCC} := \sum_{c \neq c'}\frac{V_c + V_{c'}}{2\|\mu_c - \mu_{c'}\|^2},
\end{equation*}
where $\mu_c := \frac{1}{|D_c|} \sum_{x \in D_c} \phi(x), V_c=\sum_{x \in D_c} \|\phi(x) - \mu_c\|^2$ are the mean, respectively the variance of representations of class $c$.
\end{definition}
A low NCC value indicates tight intra-class clustering and well-separated class means, characteristic of NC. In Section~\ref{sec:nc_sufficient}, we confirm that the NCC measure correlates with the angular separation of means, thus capturing all the characteristics of NC.

\paragraph{Relative Flatness}
Flatness of the loss landscape, understood as the insensitivity of the training loss to small perturbations in the parameter space, has long been linked to generalization research~\citep{hochreiter1997flat, keskarLarge}. However, classical measures based on the Hessian or curvature are sensitive to reparameterizations, making them unreliable in modern architectures. To overcome this, \citet{petzka2021relative} introduced a reparameterization-invariant notion of \emph{relative flatness}, grounded in a theory of robust generalization under locally constant labels.

Consider a model that decomposes as $f(x, \w) = g(\w \phi(x))$, where $\phi$ is a fixed feature map (e.g., the penultimate layer), $\w \in \mathbb{R}^{d \times m}$ is a weight matrix, and $g$ is a twice-differentiable function. The relative flatness is defined via the trace-weighted Hessian along the directions spanned by $\w$.

\begin{definition}[Relative Flatness]\label{def:flatnessMeasure}
Let $g$, $\ell$ be twice differentiable, and $S$ a sample set. Then:
\begin{equation*}
\kappa^\phi_{\mathrm{Tr}}(\w) := \sum_{s,s'=1}^d \langle \w_s, \w_{s'} \rangle \cdot \mathrm{Tr}(H_{s,s'}(\w, \phi(S))),
\end{equation*}
where $\w_s$ denotes the $s$-th row of $\w$, $\langle \cdot, \cdot \rangle$ is the scalar product, and $H_{s,s'}(\w, \phi(S))$ is the Hessian of the empirical loss with respect to $\w_s$ and $\w_{s'}$ evaluated at $\phi(S)$.
\end{definition}

This measure is invariant to neuron-wise rescaling and composition with orthogonal transformations, making it robust across different parameterizations. Under mild assumptions---namely, that the training data is representative and the labels are locally constant in feature space---relative flatness predicts generalization. Note that a small value of $\kappa^\phi_{\mathrm{Tr}}(\w)$ is indicative of a flat solution and good generalization. In Section~\ref{sec:nc_sufficient}, we further show that NC leads to low relative flatness, establishing a theoretical dependency between the two phenomena.

\paragraph{On the Role of Representativeness}
Both neural collapse and relative flatness have been proposed as geometric signatures of generalization. However, neither property is meaningful in isolation: their predictive value depends critically on the relationship between training and test distributions in feature space. In particular, if the training features are not representative of the full data distribution, then both NC and flatness can emerge without leading to generalization.
For example, consider a network that perfectly collapses the training set into a class-wise simplex configuration and assigns test samples to incorrect cluster means. (On discrete data, such a network function can be easily constructed.) Prediction will then fail despite the presence of collapse. This limitation also holds for relative flatness: a flat but non-representative model does not generalize~\citep[cf. Sec. 2,][]{petzka2021relative}.
While NC lacks a rigorous definition of representativeness, relative flatness is grounded in a formal framework that makes the required assumptions explicit. In particular, the theory of \emph{$\varepsilon$-representativeness}~\citep{petzka2021relative} defines a condition under which robustness in feature space becomes necessary for generalization. Under this assumption, relative flatness serves as an upper bound on feature robustness and provides a tractable diagnostic for generalization.
This perspective clarifies that our analysis operates under the assumption of representative features---an assumption that is empirically supported in our settings, as we demonstrate in Appendix~\ref{app:representativeness}.

\section{Neural Collapse and Flatness in Delayed Generalization}
\label{sec:grokking}

\begin{figure}[t]
    \vskip 0.2in
    \begin{center}
    \centerline{\includegraphics[width=\columnwidth]{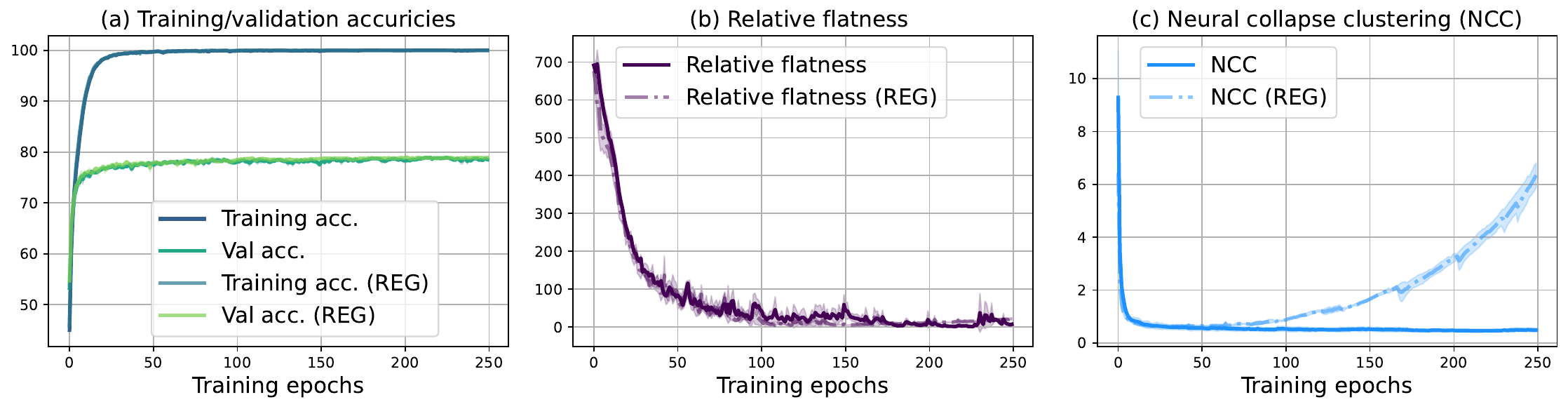}}
    \caption{Results of neural collapse clustering regularization on CIFAR-10. We display both unregularized and regularized (REG) training dynamics for comparison. Increasing NCC does not affect generalization or relative flatness, indicating that NC is not necessary for generalization. Figure (a) shows the training and validation accuracies, and y-axis represents accuracy. Figure (b) presents the relative flatness values during training, and y-axis represents measurement of relative flatness. Figure (c) illustrates the NCC development, and y-axis represents NCC value.}
    \label{fig:nc_necessary1}
    \end{center}
    \vskip -0.2in
\end{figure}

Grokking refers to the surprising phenomenon in which a neural network, after memorizing the training set with zero generalization for an extended period, suddenly begins to generalize with high accuracy, often after tens or hundreds of thousands of additional optimization steps~\citep{power2022grokking}. This temporal separation between memorization and generalization provides a unique opportunity to disentangle the factors that underlie generalization. In our setting, we use grokking, or delayed generalization, not as a curiosity, but as a causal probe: a mechanism for identifying which geometric properties emerge coincidentally with generalization, and which ones may be functionally necessary for it.

First, we study grokking on symbolic algorithmic tasks such as modular arithmetic (e.g., $x + y \mod p$), where networks are trained to predict the output of binary operations over abstract tokens. These tasks are small, fully observable, and require exact generalization beyond memorized samples, making them an ideal testbed. We follow the original setup of \citet{power2022grokking}, training a 2 layer-transformer using the AdamW optimizer with a learning rate of $10^{-4}$ and weight decay of $1.0$. Each experiment runs for $10^6$ steps with a 50/50 train/validation split. All results are averaged over three seeds.
Figure~\ref{fig:grok_nc_flatness} shows the evolution of train and validation accuracy, as well as relative flatness (Def.~\ref{def:flatnessMeasure}), and neural collapse clustering (Def.~\ref{def:ncc}) throughout training. We observe that both measures decrease sharply when the network starts generalizing, however, NCC already decreases during the memorization phase. Relative flatness remains high during the memorization phase and decreases sharply near the point at which generalization begins. This temporal coincidence suggests that both NCC and relative flatness are correlated with generalization, but representations start exhibiting collapse behaviour before generalization has emerged.

However, temporal correlation is not causation. The simultaneous decrease of NCC and flatness may reflect shared dependence on generalization, or confounders, rather than causal influence. The grokking setting thus makes it clear that we must go further: it is not enough to ask whether these quantities track generalization, we must ask whether they are causing it. In the following sections, we address this question empirically. In Section~\ref{sec:nc_sufficient}, we show that neural networks can generalize without exhibiting collapse. In Section~\ref{sec:flatness_necessary}, we provide empirical evidence that relative flatness is necessary for generalization under mild assumptions.

\section{Neural Collapse is not Necessary}
\label{sec:nc_sufficient}

Neural collapse (NC) is often observed in models that generalize well, leading to the impression that it may play a functional role in generalization. However, correlation does not imply necessity. In this section, we demonstrate both empirically and theoretically that NC is not required for generalization. While collapse can accompany good generalization, it is not necessary: It is one path that leads to flatness, and through flatness, to generalization.

To substantiate this claim, we explicitly suppress the emergence of NC during training without affecting generalization. We train a ResNet-18 on CIFAR-10 using a regularized loss of the form:
\begin{equation*}
\mathcal{L}_{\text{NC\_REG}} = \mathcal{L}_{\text{CE}} - \lambda \cdot {\text{NCC}},
\end{equation*}
where $\mathcal{L}_{\text{CE}}$ is the standard cross-entropy loss and $NCC$ is the NC clustering measure (Def.~\ref{def:ncc}). The NCC term penalizes tight clustering of penultimate-layer features by shrinking the distances between class means and increasing intra-class variance, thereby actively discouraging the geometric structure characteristic of NC. The training hyperparameter setups, additional experimental details are provided in Appendix~\ref{app:nc_sufficient}.

\begin{wrapfigure}{r}{0.5\textwidth}
\vskip 0.2in
\begin{center}
\centerline{\includegraphics[width=0.49\textwidth]{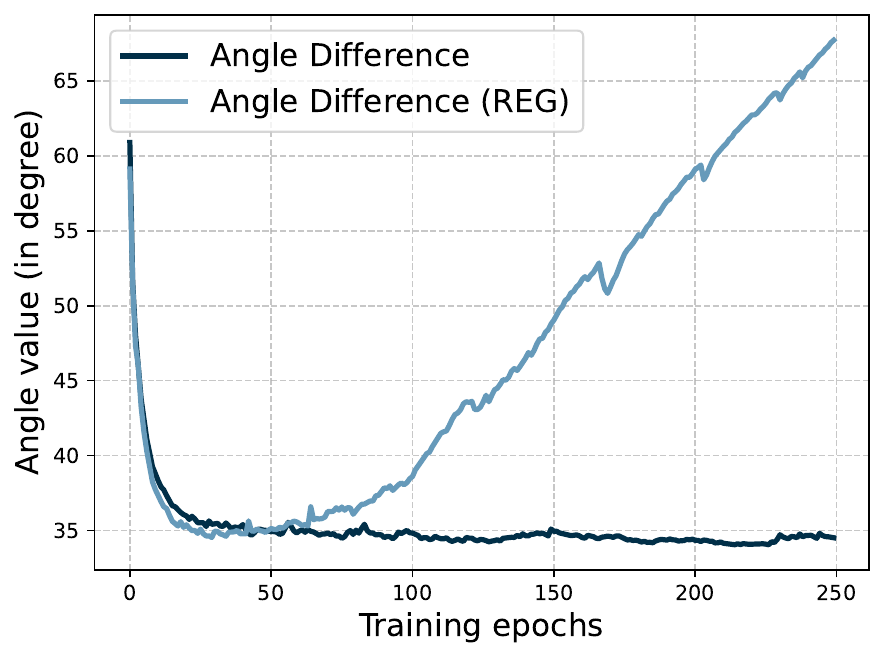}}
\caption{Pairwise cluster angles vs. optimal 10-simplex angles. Under NCC regularization, angles drift from the optimal configuration, while in standard training they remain stable. "REG" indicates use of the NCC regularizer.}

\label{fig:angle_difference}
\end{center}
\vskip -0.2in
\end{wrapfigure}

We show that generalization can occur without NC in Figure~\ref{fig:nc_necessary1}. As shown in panel (a), both training and validation accuracies remain unaffected by the regularizer. Panel (b) shows that relative flatness, our proxy for generalization, also remains stable. However, panel (c) reveals that NC is effectively suppressed: after a brief drop in the NCC measure, it steadily increases throughout training. This provides clear evidence that generalization can occur in the complete absence of NC, and that collapse is not a necessary condition for generalization. In Figure~\ref{fig:angle_difference} we confirm that the NCC measure correctly captures cluster angles as well. Under neural collapse, angles are close to the optimal 10-simplex, yet when we regularize with the NCC measure, angles deviate strongly from this optimum.

Why, then, does neural collapse often accompany generalization? We argue that collapse is one geometric pathway to flat solutions, which are more directly linked to generalization. In fact, under mild assumptions, NC in the penultimate layer leads to relative flatness\footnote{Related works often rely on simplifying assumptions, e.g., one-layer behavior or layer peeling~\citep{mixon2022neural, fang2021exploring}, yet our analysis does not require such approximations.}. This relationship is formalized below.

We assume that classical neural collapse \citep{papyan2020prevalence} in the limit holds for a network function $f$. That is, the following four conditions apply to the neural network $f(x) = \text{softmax}(w \phi(x)+b)$ with $\phi(x)$ as the representation of the penultimate layer.
\begin{assumptions}[Neural Collapse~\citep{papyan2020prevalence}]
Let $\phi(x)$ denote the penultimate layer feature representation of an input $x\in\Xcal$, and for a dataset $D\subset\Xcal$, let $D_c\subseteq D$ denote the set of inputs belonging to class $c\in\{1,\dots,k\}$. Then the four neural collapse criteria are
\begin{itemize}
\item[(NC1)] Feature representations collapse to class means: $\phi(x) = \mu_c$ for all $x \in D_c$.
\item[(NC2)] All class means $\mu_c$ have equal distance to the global mean $\mu_g$, i.e., $\|\mu_c-\mu_g\|_2 = M$, and they form a centered equiangular tight frame (ETF), so that $(\mu_j-\mu_g)^\top (\mu_c -\mu_g)= -\frac{M^2}{k - 1}$ for $j \neq c$. 
\item[(NC3)] The classifier weights align with class means, i.e., $w_j = \lambda (\mu_j - \mu_g)$ for all $j$ and some $\lambda > 0$.
\item[(NC4)] $\text{argmax}_{j} w_j^\top h + b_j = \text{argmin}_j \|h-\mu_j\|$.
\end{itemize}
\end{assumptions}\textbf{}

\begin{remark}
For the proof of Proposition~\ref{prop:ncimpliesflatness}, we additionally assume that $\|\mu_c\| \leq M$ for each $c$. This entails that the center of the ETF is not diverging. This mild technical condition does not alter the essence of the neural collapse assumptions.
\end{remark}

\begin{proposition}
Let $f(x) = \text{softmax}(w \phi(x)+b)$ be a neural network with softmax output and trained with cross-entropy loss, where $w \in \mathbb{R}^{k \times d}$ denotes the final-layer weight matrix classifying into $k$ classes and $\phi(x)$ is the penultimate-layer representation. Assume that the classical neural collapse holds in the limit as above. Then relative flatness $\kappa_\phi(w)$ is bounded by:
\[
\kappa_\phi(w) \leq 
\lambda^2 k^3 M^4 \cdot \frac{e^{-\lambda M^2 \cdot \frac{k}{k-1}}}{\left(1 + (k - 1)e^{-\lambda M^2 \cdot \frac{k}{k-1}}\right)^2}
\]
In particular, for sufficiently large $\lambda$, this yields the asymptotic bound:
\[
\kappa_\phi(w) \lesssim \lambda^2 k^3 M^4 e^{-\lambda M^2 \cdot \frac{k}{k-1}},
\]
which decays exponentially in $\lambda$.
\label{prop:ncimpliesflatness}
\end{proposition}
The proof is provided in Appendix~\ref{app:proof}.
The fact that relative flatness indeed decreases to zero in the limit follows from the fact that the conditions of neural collapse are independent of the weight norm and thereby also the magnitude of the scalar $\lambda$. As training continues in the NC limit, the training loss can be decreased to zero by increasing $\lambda$, because the softmax probabilities converge to $\hat{y}_c(x_c) = 1 $ and $\hat{y}_j(x_c) =0, j \neq c$ as $\lambda \rightarrow \infty$. 

Taken together, our results demonstrate that, while NC can facilitate generalization by inducing flatness, it is not a prerequisite. Flatness remains the more fundamental quantity, and its relevance critically depends on the representativeness of the learned feature space. It remains to investigate whether flatness is a sufficient condition and whether it could be necessary.

\section{Relative Flatness is Necessary}
\label{sec:flatness_necessary}

\begin{figure}[t]
\vskip 0.2in
\begin{center}
\centerline{\includegraphics[width=0.9\columnwidth]{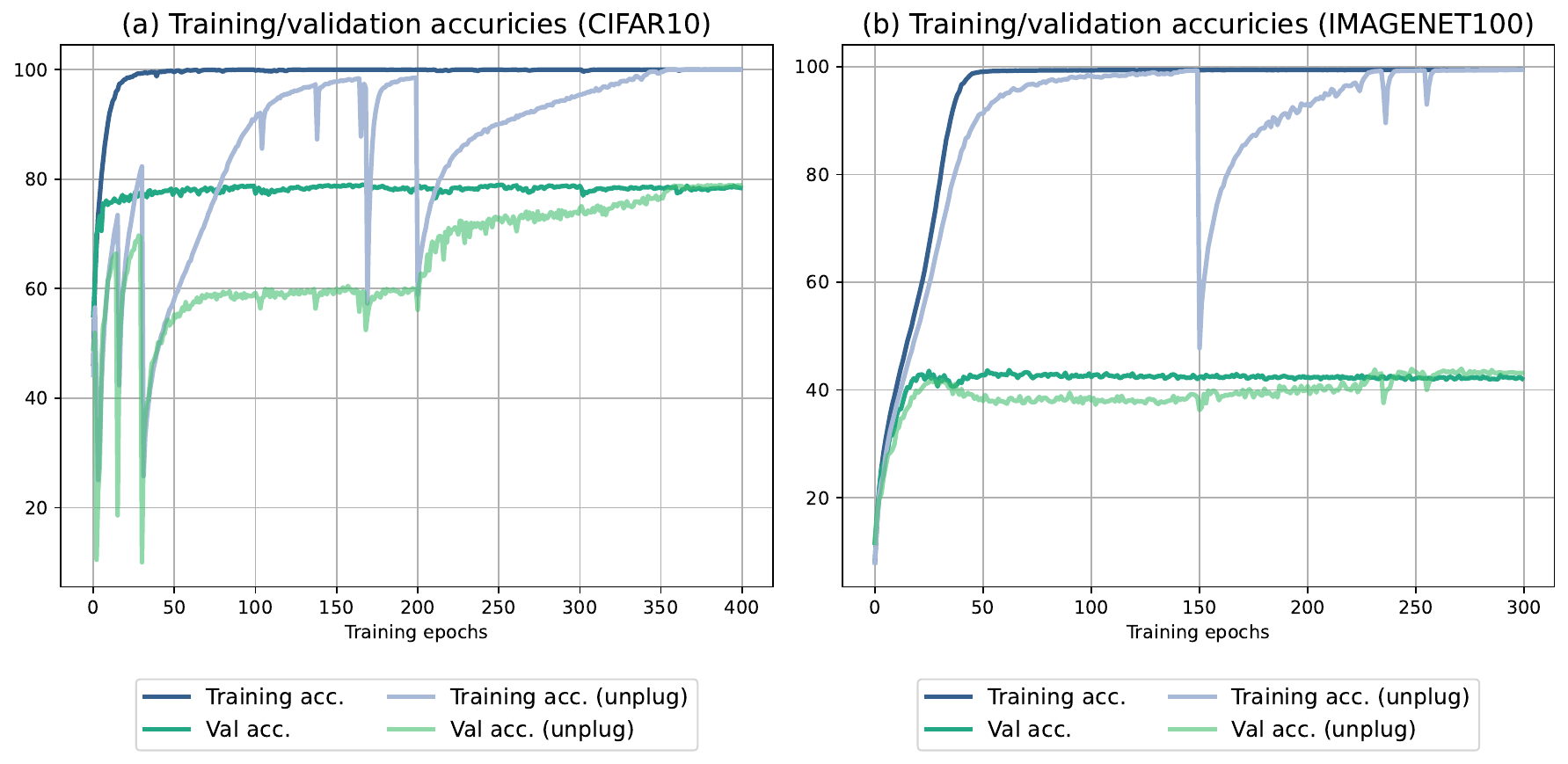}}
\caption{Results of inducing delayed generalization on training Resnet18 on CIFAR-10 and ViT on ImageNet-100 through relative
flatness regularization. We display both unregularized and regularized (unplug) training dynamics for
comparison. Relative flatness regularizer is removed at epoch 200 for CIFAR-10 experiment and 150
for ImageNet-100. Delayed generalization occurs after the regularizer is
removed, as indicated by sharp increase in validation accuracy. Figure (a) shows the
training and validation accuracies on CIFAR-10, and Figure (b) on ImageNet-100.}
\label{fig:grokking_cifar10}
\end{center}
\vskip -0.2in
\end{figure}

In this section, we induce delayed generalization, similar to what is observed in grokking. These experiments demonstrate that relative flatness is necessary for generalization. To achieve such behaviour, we design a loss function that includes the following regularization term:
\begin{equation*}
\mathcal{L}_{\text{RF\_REG}} = \mathcal{L}_{\text{CE}} - \lambda \cdot {\kappa^\phi_{Tr}(\w)},
\end{equation*}

where $\kappa^\phi_{Tr}(\w)$ is specified in Definition~\ref{def:flatnessMeasure}, and $\mathcal{L}_{\text{CE}}$ denotes the cross-entropy loss. To efficiently compute the relative flatness on the penultimate layer, we adopt the closed-form derivation under cross-entropy loss provided by~\cite{walter2024uncanny}. By maximizing the relative flatness, we encourage the network to explore sharper regions of the loss landscape rather than flatter ones, which leads to suboptimal generalization. The coefficient $\lambda$ controls the strength of the regularization. 
We train ResNet-10 on CIFAR-10 and ViT~\citep{dosovitskiyimage} on ImageNet-100\footnote{\href{https://www.kaggle.com/datasets/ambityga/imagenet100/data}{https://www.kaggle.com/datasets/ambityga/imagenet100/data}}, both from scratch, to evaluate the effectiveness of inducing delayed generalization in general settings by regularizing relative flatness. The results are shown in Figure~\ref{fig:grokking_cifar10}. We also fine-tune TinyBERT \citep{jiao2019tinybert} and DistilGPT2 \citep{sanh2019distilbert} on SST-5 to show that delayed generalization can be induced by regularizing relative flatness in pretrained language models. It is worth noting that stabilizing training in sharp minima under the relative-flatness regularizer is nontrivial, since cross-entropy loss reduces prediction uncertainty (in terms of the entropy of softmax), whereas the regularizer operates effectively only under high uncertainty. Additional analysis of the behavior and sensitivity of the regularizer, particularly its conflict with cross-entropy loss, is provided in Appendix~\ref{app:cifar10_grokking}, which also includes the training setups and additional experimental details.

From Figure~\ref{fig:grokking_cifar10}~(a), we observe that the validation accuracy is low, but increases after the relative flatness regularizer is removed at epoch $200$. A similar phenomenon is observed in the results on ImageNet-100, shown in Figure~\ref{fig:grokking_cifar10}~(b) and in the NLP experiments presented in Appendix~\ref{app:sst_gpt_bert}. This recovery is facilitated by applying standard regularization techniques after unplugging, helping the model escape sharp minima and restore validation accuracy to the baseline. The successful induction of delayed generalization in these tasks demonstrates that this effect is not exclusive to algorithmic tasks trained with the next-token prediction objective. By surpressing relative flatness, we can induce delayed generalization in more general tasks and training settings.

In particular, when the relative flatness regularizer is applied throughout the entire training process, the final validation accuracy remains around $60\%$ on CIFAR-10, whereas the optimal validation accuracy reaches approximately $78\%$. For ImageNet-100, the stable validation accuracy under regularization is around $38\%$, compared to an optimal accuracy of around $43\%$. A detailed analysis of pretrained language models on SST-5 is provided in Appendix~\ref{app:sst_gpt_bert}. This gap in validation accuracy, caused by the absense of relative flatness, suggests that optimal generalization cannot be achieved in such a way. 

To further support the need for solutions to be flat for better generalization, we compute the relative flatness values at the epoch right before the regularizer is removed and at the final training epoch. For CIFAR-10, at the epoch $199$, the measure of relative flatness under regularization is $2209.94$, which decreases to $1.91$ at the final epoch. For ImageNet-100, the measure is $22212.75$ at epoch 149 and $313.22$ at the final epoch. We also observe similar trends on SST-5 with TinyBERT and DistilGPT2 (Appendix~\ref{app:sst_gpt_bert}). Therefore, a smaller generalization gap is directly correlated with the value of relative flatness.

Overall, these results provide strong empirical evidence that \textit{relative flatness is necessary for generalization}, consistently observed across all evaluated tasks.

\section{Discussion and Conclusion}
\label{sec:discussion}
Our results identify relative flatness as a potentially necessary condition for generalization in deep networks, while neural collapse emerges as a correlated but non-essential feature of late-stage training. This distinction reshapes our understanding of the geometry of well generalizing solutions and invites several important reflections. Our experiments show that neural collapse tends to coincide with high training accuracy but does not reliably track generalization. This suggests that it is more a byproduct of training dynamics than a causal mechanism, a view that aligns with findings from \citet{hui2022limitations} and \citet{hong2024beyond}. Nevertheless, we show theoretically that under mild assumptions, neural collapse-style clustering leads to relative flatness. As such, collapse may still serve as a practical proxy for flat solutions in some settings.

While neural collapse produces highly symmetric and compact feature representations, this structural simplicity can be problematic in transfer learning and out-of-distribution scenarios. For instance, if a network is trained to classify supercategories by collapsing fine-grained class distinctions, it becomes impossible to recover those distinctions later, thus limiting the generality and interpretability of the learned features~\citep{hui2022limitations}. The requirement for geometric compression of the representation space, with neural collapse as one instance, has also been questioned from an information-theoretic perspective on deep learning dynamics~\citep{adilovainformation}.

Flatness itself has also come under scrutiny, particularly in the context of sharpness-aware minimization (SAM)~\citep{foret2021sharpnessaware} and recent critique by \citet{andriushchenko2023modern}. However, many such criticisms target classical flatness metrics, which are sensitive to reparameterization~\citep{dinh2017sharp} or measures with unclear connection to geometric flatness~\citep{andriushchenko2022towards}. Our analysis relies on relative flatness, a reparameterization-invariant measure with theoretical guarantees~\citep{petzka2021relative, walter2024uncanny}. While it remains an open question how these recent critiques translate to relative flatness, our findings suggest that it continues to offer a reliable signal for generalization in the settings we consider. One of the notable properties of relative flatness is that it is theoretically defined for a single, preselected layer, which enables computability independent of the model size. This formulation also allows for a non-idealized comparison with neural collapse, which is similarly defined in terms of single-layer properties, thereby enhancing the precision of our analysis. At the same time, it raises questions about how to characterize embeddings in other layers, since prior work~\citep{zhang2022all, adilovalayer, ansuini2019intrinsic} has shown that different representations and layers exhibit distinct properties, even though relative flatness appears to behave similarly across layers~\citep{walter2025flatness}.

The theoretical link between relative flatness and generalization depends on two key assumptions: that labels are locally constant and that the features at the selected embedding layer are representative of the data distribution. In our settings (standard image classification, modular arithmetic tasks and NLP tasks) both assumptions are well justified: (i) the assumption of locally constant labels underlies the very notion of adversarial robustness, and its empirical validity is supported by our experimental findings, and (ii) we show in Appendix~\ref{app:representativeness} that representativeness improves with the onset of generalization. Nonetheless, these assumptions may not hold in other domains such as structured prediction, high-noise regression, or complex real-world tasks, where label semantics may vary significantly under small input changes.

One of our most surprising results is that regularizing networks away from flat minima consistently induces delayed generalization, mimicking grokking even on real-world datasets and architectures. This underscores the functional importance of relative flatness, not just its correlation with generalization. It also opens new directions for controlling training dynamics via geometric regularization. However, the extent to which these artificially induced grokking phases mirror the mechanisms of standard grokking remains to be understood.

Finally, while penalizing relative flatness reliably delays generalization, encouraging flatter minima does not substantially improve it~\citep{adilova2023fam}. Even SAM, which improves training results in vision tasks, surprisingly does not actually encourage only flatter minima~\citep{andriushchenko2022towards}. This asymmetry supports the long-held view that stochastic gradient descent (SGD) already biases toward flat minima~\citep{hochreiter1997flat, keskarLarge}, and that relative flatness may serve more effectively as a diagnostic and interpretive tool rather than a universal optimization target.

While our findings are robust across datasets and architectures, our conclusions are necessarily limited in scope. We focus exclusively on classification networks and generative language models trained with standard objectives and optimizers; whether our results extend to contrastive pretraining, or large-scale language models remains an open question. 

We challenge the assumption that neural collapse is essential for generalization. Through grokking and targeted regularization, we isolate flatness as the necessary factor. Our theoretical and empirical results jointly establish flatness as the geometric driver of generalization.

\section*{Acknowledgements}
This research is supported by the Federal Ministry of Education and Research of Germany and the state of North Rhine-Westphalia as part of the Lamarr Institute for Machine Learning and Artificial Intelligence, and received support from the Cancer Research Center Cologne Essen (CCCE).

\newpage
\bibliography{references}
\bibliographystyle{plainnat}

\newpage
\appendix
\onecolumn

\appendix
\section{Proof of Proposition~\ref{prop:ncimpliesflatness}}
\label{app:proof}
First, we provide a technical lemma that establishes that, under Neural Collapse, the combined contribution of the bias and the projection of the class mean onto the global mean is identical across classes. This means that the logits are effectively governed solely by the geometric structure of the class means relative to each other.
\begin{lemma}\label{lma:constantRest}
Suppose that the neural collapse conditions $(NC1)-(NC4)$ in the limit apply as in Section~\ref{sec:nc_sufficient} (see also below). Then there is a constant $d$ such that $ \lambda ( \mu_j-\mu_g)^\top \mu_g + b_j= d$ for all $j$.
\end{lemma}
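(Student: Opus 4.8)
The plan is to reduce (NC4) to a purely geometric statement about the centered class-mean directions and then read off the equality of offsets. First I would use (NC3) to substitute $w_j=\lambda(\mu_j-\mu_g)$ into the logits and introduce the shorthand $\rho_j:=b_j+\lambda(\mu_j-\mu_g)^\top\mu_g$, which is precisely the quantity the lemma claims is constant. Splitting $h=(h-\mu_g)+\mu_g$ gives $w_j^\top h+b_j=\lambda(\mu_j-\mu_g)^\top(h-\mu_g)+\rho_j$, so the bias and the global-mean projection are collected into the single offset $\rho_j$, and the only $h$-dependence left sits in the inner product with the centered mean direction.

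Next I would rewrite the nearest-class-center rule in the same centered coordinates. Expanding $\|h-\mu_j\|^2=\|h-\mu_g\|^2-2(h-\mu_g)^\top(\mu_j-\mu_g)+\|\mu_j-\mu_g\|^2$ and discarding the two $j$-independent terms (the last equals $M^2$ for every $j$ by (NC2)), one gets $\text{argmin}_j\|h-\mu_j\|=\text{argmax}_j(\mu_j-\mu_g)^\top(h-\mu_g)$. Combined with the previous display, (NC4) becomes the assertion that for every feature vector $h$,
\[
\text{argmax}_j\bigl[\lambda(\mu_j-\mu_g)^\top(h-\mu_g)+\rho_j\bigr]=\text{argmax}_j(\mu_j-\mu_g)^\top(h-\mu_g),
\]
i.e. adding the offsets $\rho_j/\lambda$ to the scaled scores never changes the maximizer.

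Then I would argue by contradiction. If the $\rho_j$ are not all equal, let $J\subsetneq\{1,\dots,k\}$ be the nonempty proper set of indices attaining $\max_j\rho_j$, pick any $l\notin J$, and probe $h=\mu_g+\epsilon(\mu_l-\mu_g)$ with $\epsilon>0$ small. On the right-hand side, $(\mu_j-\mu_g)^\top(h-\mu_g)=\epsilon(\mu_j-\mu_g)^\top(\mu_l-\mu_g)$, which by (NC2) equals $\epsilon M^2$ for $j=l$ and $-\epsilon M^2/(k-1)$ otherwise, so the unique maximizer is $l$. On the left-hand side, once $\epsilon$ is small enough that $\epsilon\cdot\max_j|\lambda(\mu_j-\mu_g)^\top(\mu_l-\mu_g)|$ is smaller than half the gap $\max_j\rho_j-\max_{j\notin J}\rho_j>0$, the offset term dominates and the maximizer lies in $J$, hence is not $l$ --- contradicting the displayed identity. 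Therefore all $\rho_j$ coincide, and calling the common value $d$ proves the lemma.

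The step I expect to require the most care is the justification for evaluating (NC4) at arbitrary feature vectors $h$, not only at points realized by data (where (NC1) pins $\phi(x)$ to the class means): this is the standard reading of the Papyan et al. condition as a property of the network's decision function on feature space, and it is exactly what makes the offsets testable in directions pointing away from any class mean --- restricting (NC4) to the data would only yield the weak inequalities $\rho_c-\rho_j>-\lambda M^2 k/(k-1)$, which do not force equality. A secondary point to handle cleanly is the use of (NC2) (equal norms and, via Cauchy--Schwarz, distinctness of the means) to ensure that $(\mu_j-\mu_g)^\top(\mu_l-\mu_g)$ is strictly maximized at $j=l$, so that $l$ is genuinely the unique right-hand-side maximizer in the probe; this also covers the degenerate $k=2$ case.
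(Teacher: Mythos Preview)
Your proposal is correct and follows essentially the same route as the paper: both use (NC2)--(NC3) to rewrite (NC4) as the statement that adding the offsets $\rho_j=b_j+\lambda(\mu_j-\mu_g)^\top\mu_g$ to the scores never changes the maximizer, and then conclude the $\rho_j$ must all coincide. Your version is more explicit than the paper's in justifying this final implication (the paper simply asserts ``can only hold true if constant''), and your observation that (NC4) must be read as a condition on all feature vectors $h$, not merely the class means, is exactly the point the paper's unnecessary invocation of (NC1) in its argmin chain tends to obscure.
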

\begin{proof}

For all $h$, the following set of equations holds:
\begin{align*}
\text{argmax}_j\ \lambda (\mu_j-\mu_g)^\top h+b_j &\stackrel{(NC3)}{=}\text{argmax}_j\  w_j^\top h + b_j \\
& \stackrel{(NC4)}{=} \text{argmin}_j\ \|h-\mu_j\| \\
&=  \text{argmin}_j\ \|(h-\mu_g)- (\mu_j-\mu_g)\|^2 \\
& = \text{argmin}_j\ \| h-\mu_g\|^2 + \|\mu_j-\mu_g\|^2 - 2 (h-\mu_g)^\top(\mu_j-\mu_g) \\
&  \stackrel{(NC2)}{=} \text{argmin}_j\  \| h-\mu_g\|^2 + M^2 - 2 (h-\mu_g)^\top(\mu_j-\mu_g)\\
&  = \text{argmax}_j\ 2 (\mu_j-\mu_g)^\top (h-\mu_g) - \| h-\mu_g\|^2.\\
\end{align*}
Let $\epsilon >0$ be a real number and $c\in\{1,\ldots,k\}$. Since the above equations hold for all $h$, they hold in particular for all $h(c,\epsilon) = \frac{\epsilon}{M} (\mu_c-\mu_g)+  \mu_g$. Note that $\|h(c,\epsilon)-\mu_g\|^2=\epsilon^2$. 

Substituting $h(c,\epsilon)$ for $h$ in the above equation and letting $d_j = \lambda (\mu_j-\mu_g)^\top \mu_g +b_j$ gives us that
\begin{align*}
\text{argmax}_j\  \frac{\lambda\epsilon}{M} (\mu_j-\mu_g)^\top  (\mu_c-\mu_g) + d_j &= \text{argmax}_j\ \frac{2\epsilon}{M} (\mu_j-\mu_g)^\top (\mu_c-\mu_g) -\epsilon^2 \\
&= \text{argmax}_j\ (\mu_j-\mu_g)^\top (\mu_c-\mu_g)\\
&\stackrel{(NC2)}{=} c
\end{align*}
This implies that 
\begin{align*}
\frac{\lambda\epsilon}{M} (\mu_c-\mu_g)^\top (\mu_c-\mu_g) + d_c &\geq \frac{\lambda\epsilon}{M} (\mu_j-\mu_g)^\top  (\mu_c-\mu_g) + d_j,\ j\neq c 
\end{align*}
Using $(NC2)$, this gives that 
\begin{align*}
\frac{\lambda\epsilon M k}{k-1}+ d_c &\geq  d_j ,\ j\neq c 
\end{align*}
But since $\epsilon>0$ was arbitrary, we can let $\epsilon \rightarrow 0$ to get that $d_c\geq d_j$ for all $j\neq c$. Finally, also $c$ was arbitrarily chosen, which implies that $d_j=d_c$ for all $j,c$. Hence,
$d_j= \lambda  (\mu_j-\mu_g)^\top \mu_g + b_j $ is constant over $j$.
\end{proof}
We now provide a second technical lemmma showing that relative flatness is upper bounded by the simplified version $\|w\|^2\mathrm{Tr}(H(w))$, for which we have a closed form expression in terms of logits.
\begin{lemma}
    Let $H(w)$ be the Hessian wrt. the penultimate layer weights $w\in\RR^{d\times m}$ at a minimum and $\kappa_\phi(w)$ the relative flatness measure. Then
    \[
    \kappa_\phi(w) \leq \|w\|^2\mathrm{Tr}(H(w))\enspace .
    \]
    \label{lm:simplifiedrelativeflatness}
\end{lemma}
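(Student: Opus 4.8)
\textbf{Proof plan for Lemma~\ref{lm:simplifiedrelativeflatness}.}
The plan is to expand the definition of $\kappa^\phi_{\mathrm{Tr}}(\w)$ and recognize the double sum $\sum_{s,s'}\langle \w_s,\w_{s'}\rangle\,\mathrm{Tr}(H_{s,s'})$ as a quadratic form in the rows of $\w$, then bound it by $\|\w\|^2$ times the operator norm (or trace) of the relevant matrix. Concretely, write $H_{s,s'}$ for the block of the full Hessian $H(\w)$ indexed by the row pair $(s,s')$; the quantity $\sum_{s,s'}\langle \w_s,\w_{s'}\rangle\,\mathrm{Tr}(H_{s,s'})$ is exactly $\mathrm{vec}(\w)^\top \widetilde{H}\,\mathrm{vec}(\w)$ for an appropriate matrix $\widetilde{H}$ built from the traces of the Hessian blocks, or equivalently it can be written as $\sum_{i} \w_{:,i}^\top H^{(i)} \w_{:,i}$ summed over the $m$ coordinates of the feature space, where each $H^{(i)}$ is a $d\times d$ matrix and $\w_{:,i}$ is the corresponding column slice. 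The key structural fact, inherited from the cross-entropy-plus-softmax setting, is that each such $H^{(i)}$ is positive semidefinite (the loss is convex in the final-layer weights), which lets us apply the Rayleigh bound.

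First I would make the block/trace bookkeeping precise: identify $\mathrm{Tr}(H_{s,s'}(\w,\phi(S)))$ with $\sum_{t=1}^m [H]_{(s,t),(s',t)}$, i.e.\ a partial trace over the feature index, so that $\kappa^\phi_{\mathrm{Tr}}(\w) = \sum_{t=1}^m \sum_{s,s'} \w_{s,t}\,\w_{s',t}\,[H_{s,s'}]_{t,t}$ — wait, this needs care, so more robustly I would simply note $\kappa^\phi_{\mathrm{Tr}}(\w) = \langle \w\w^\top, T\rangle$ where $T_{s,s'} := \mathrm{Tr}(H_{s,s'})$ is a symmetric PSD $d\times d$ matrix (PSD because $H$ is PSD and partial trace preserves positive semidefiniteness). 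Then by the standard inequality $\langle A, B\rangle \le \|A\|_* \,\|B\|_{\mathrm{op}} \le \mathrm{Tr}(A)\,\mathrm{Tr}(B)$ for PSD matrices $A=\w\w^\top$, $B=T$, and since $\mathrm{Tr}(\w\w^\top)=\|\w\|^2$ and $\mathrm{Tr}(T)=\sum_s \mathrm{Tr}(H_{s,s}) = \mathrm{Tr}(H(\w))$, we obtain $\kappa^\phi_{\mathrm{Tr}}(\w) \le \|\w\|^2\,\mathrm{Tr}(H(\w))$.

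The main obstacle I anticipate is establishing positive semidefiniteness cleanly, both of the full Hessian $H(\w)$ (which requires invoking convexity of the cross-entropy loss in the last-layer weights — standard, but worth stating) and of the partial-trace matrix $T$. The partial-trace-preserves-PSD claim is true (it is a completely positive map), but I would give a one-line self-contained argument: $T = \sum_{t} B_t^\top H B_t$ where $B_t$ selects the $t$-th feature coordinate across all $d$ rows, hence $T \succeq 0$. A secondary subtlety is the precise indexing convention for $\w\in\RR^{d\times m}$ versus the $\RR^{C\times d}$ convention used later in the proposition; I would fix one convention at the outset and note the bound is invariant under the transpose. Once PSD-ness is in hand, the inequality chain $\langle A,B\rangle \le \mathrm{Tr}(A)\mathrm{Tr}(B)$ is elementary (diagonalize $B$, use $\langle A, B\rangle = \sum_k \lambda_k(B)\, a_{kk}$ in $B$'s eigenbasis with $a_{kk}\ge 0$ and $\sum_k a_{kk}=\mathrm{Tr}(A)$), so no heavy computation remains.
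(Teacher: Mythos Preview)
Your proposal is correct but proceeds differently from the paper. The paper's argument is a two-stage Cauchy--Schwarz: first bound each summand as
\[
\langle \w_s,\w_{s'}\rangle\,\mathrm{Tr}(H_{s,s'})\le \|\w_s\|\,\|\w_{s'}\|\,\sqrt{\mathrm{Tr}(H_{s,s})}\,\sqrt{\mathrm{Tr}(H_{s',s'})},
\]
then apply Cauchy--Schwarz once more to the separable double sum $\sum_{s,s'} a_s a_{s'} b_s b_{s'}=(\sum_s a_s b_s)^2$ with $a_s=\|\w_s\|$, $b_s=\sqrt{\mathrm{Tr}(H_{s,s})}$, obtaining $(\sum_s\|\w_s\|^2)(\sum_s\mathrm{Tr}(H_{s,s}))$. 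You instead identify $\kappa_\phi(\w)=\langle \w\w^\top,T\rangle$ with $T_{s,s'}=\mathrm{Tr}(H_{s,s'})$ the partial trace of $H$, observe both factors are PSD, and invoke $\langle A,B\rangle\le\mathrm{Tr}(A)\,\mathrm{Tr}(B)$ for PSD $A,B$. Both routes ultimately rely on $H\succeq 0$: the paper needs it (implicitly) for the block inequality $|\mathrm{Tr}(H_{s,s'})|\le\sqrt{\mathrm{Tr}(H_{s,s})\mathrm{Tr}(H_{s',s'})}$, while you use it to push PSD through the partial trace. Your version is more structural and makes the convexity hypothesis explicit, which is an improvement in clarity; the paper's version is marginally shorter and stays at the level of scalar Cauchy--Schwarz, but leaves the PSD requirement unstated.
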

\begin{proof}
    Recall that 
    \[
    \kappa_\phi(w) = \sum_{s, s'}\langle w_s,w_{s'}\rangle\mathrm{Tr}(H_{s,s'})\leq \sum_{s, s'}\left|\langle w_s,w_{s'}\rangle\mathrm{Tr}(H_{s,s'})\right|\leq \sum_{s, s'}\|w_s\|\|w_{s'}\|\left|\mathrm{Tr}(H_{s,s'})\right|\enspace .
    \]
    Since $w$ is at a minimum, $H$ is positive semi-definite (psd) and therefore $(\mathrm{Tr}(H_{s,s'}))_{s,s'}$ is also psd. Then it holds that 
    \[
        \left|\mathrm{Tr}(H_{s,s'})\right|\leq\sqrt{\mathrm{Tr}(H_{s,s})\mathrm{Tr}(H_{s',s'})}\enspace .
    \]
    
    Summing over all $s,s'$ and using the Cauchy-Schwartz inequality we get
    \[
    \sum_{s, s'}\|w_s\|\|w_{s'}\|\sqrt{\mathrm{Tr}(H_{s,s})\mathrm{Tr}(H_{s',s'})}\leq \left(\sum_s \|w_s\|\sqrt{\mathrm{Tr}(H_{s,s})}\right)^2\enspace ,
    \]
    and using symmetrie yields
    \[
    \kappa_\phi(w) \leq \|w\|^2\left(\sqrt{\mathrm{Tr}(H(w))}\right)^2=\|w\|^2\mathrm{Tr}(H(w))\enspace .
    \]
\end{proof}
With this we can proof the theorem, which we restate together with the neural collapse criteria for convenience.
\begin{assumptions}[Neural Collapse~\citep{papyan2020prevalence}]
Let $\phi(x)$ denote the penultimate layer feature representation of an input $x\in\Xcal$, and for a dataset $D\subset\Xcal$, let $D_c\subseteq D$ denote the set of inputs belonging to class $c\in\{1,\dots,k\}$. Then the four neural collapse criteria are
\begin{itemize}
\item[(NC1)] Feature representations collapse to class means: $\phi(x) = \mu_c$ for all $x \in D_c$.
\item[(NC2)] All class means $\mu_c$ have equal distance to the global mean $\mu_g$, i.e., $\|\mu_c-\mu_g\|_2 = M$, and they form a centered equiangular tight frame (ETF), so that $(\mu_j-\mu_g)^\top (\mu_c -\mu_g)= -\frac{M^2}{k - 1}$ for $j \neq c$. We additionally assume that $\|\mu_c\|\leq M$ for each $c$. That is, we assume that the center of the ETF is not diverging using the same constant $M$ for simplicity.
\item[(NC3)] The classifier weights align with class means, i.e. $w_j = \lambda (\mu_j - \mu_g)$ for all $j$ and some $\lambda > 0$.
\item[(NC4)] $\text{argmax}_{j} w_j^\top h + b_j = \text{argmin}_j \|h-\mu_j\|$.
\end{itemize}
\end{assumptions}
\begin{proposition}
Let $f(x) = \text{softmax}(w \phi(x)+b)$ be a neural network with softmax output and trained with cross-entropy loss, where $w \in \mathbb{R}^{k \times d}$ denotes the final-layer weight matrix classifying into $k$ classes and $\phi(x)$ is the penultimate-layer representation. Assume that the classical neural collapse holds in the limit as above. Then relative flatness $\kappa_\phi(w)$ is bounded by:
\[
\kappa_\phi(w) \leq 
\lambda^2 k^3 M^4 \cdot \frac{e^{-\lambda M^2 \cdot \frac{k}{k-1}}}{\left(1 + (k - 1)e^{-\lambda M^2 \cdot \frac{k}{k-1}}\right)^2}
\]
In particular, for sufficiently large $\lambda$, this yields the asymptotic bound:
\[
\kappa_\phi(w) \lesssim \lambda^2 k^3 M^4 e^{-\lambda M^2 \cdot \frac{k}{k-1}},
\]
which decays exponentially in $\lambda$.
\end{proposition}
\begin{proof}
Lemma~\ref{lma:constantRest} shows that the NC conditions imply that there is a constant $d$ such that $b_j+ \lambda ( \mu_j-\mu_g)^\top \mu_g = d$ for all class labels $j=1,\ldots,k$.  For $x_c\in D_c$, the logits of $f$ are 
\begin{equation*}
    \begin{split}
        w_j^\top \phi(x_c)+b_j =& \lambda (\mu_j-\mu_g)^\top \mu_c+b_j \\
        =& \lambda (\mu_j-\mu_g)^\top (\mu_c-\mu_g) + \underbrace{\lambda (\mu_j-\mu_g)^\top \mu_g+b_j}_{=d}=\lambda (\mu_j-\mu_g)^\top (\mu_c-\mu_g)+d\enspace ,
    \end{split}
\end{equation*}
and the softmax probabilities become:
\[
\hat{y}_c(x_c) = \frac{1}{1 + (k - 1) e^{-\lambda \delta}}, \quad \hat{y}_j(x_c) = \frac{e^{-\lambda \delta}}{1 + (k - 1) e^{-\lambda \delta}}, \quad j \ne c,
\]
with margin
\[
\delta := M^2 - \left(-\frac{M^2}{k - 1}\right) = M^2 \cdot \frac{k}{k - 1}.
\]

The trace of the Hessian for input $x_c \in D_c$ is (cf. \citet{walter2024uncanny}):
\begin{align*}
\mathrm{Tr}(H(x_c,w)) &= \sum_{j=1}^k \hat{y}_j(x_c)(1 - \hat{y}_j(x_c)) \cdot \|\phi(x_c)\|^2 = M^2 \cdot \sum_{j=1}^k \hat{y}_j(x_c)(1 - \hat{y}_j(x_c)).\\
& =M^2 \frac{(k - 1) e^{-\lambda \delta}\cdot(2+(k-2)e^{-\lambda\delta})}{(1 + (k - 1) e^{-\lambda \delta})^2}.
\end{align*}
Thus,
\[
\mathrm{Tr}(H(w)) \leq M^2 \cdot \frac{(k - 1)k e^{-\lambda \delta}}{(1 + (k - 1) e^{-\lambda \delta})^2}.
\]
Since $\|w\|^2 = \lambda^2 \sum_j \|\mu_j-\mu_g\|^2 = \lambda^2 k M^2$, we obtain with Lemma~\ref{lm:simplifiedrelativeflatness}:
\[
\kappa_\phi(w) \leq \lambda^2 k M^2 \cdot M^2 \cdot \frac{(k - 1)k e^{-\lambda \delta}}{(1 + (k - 1) e^{-\lambda \delta})^2} \leq \lambda^2 k^3 M^4 \cdot \frac{e^{-\lambda M^2 \cdot \frac{k}{k-1}}}{\left(1 + (k - 1)e^{-\lambda M^2 \cdot \frac{k}{k-1}}\right)^2}.
\]
This yields the desired bound, and for large $\lambda$, the denominator tends to 1, giving the asymptotic behavior.
\end{proof}

\paragraph{Relationship between Neural Collapse and Relative Flatness.}
Neural Collapse (NC) characterizes a geometric property of the feature representations in the penultimate layer, which we quantify by the Neural Collapse Coefficient (NCC). Importantly, NCC is invariant to rescaling of the last-layer weights, and therefore independent of the scalar~$\lambda$ that controls their magnitude. Consequently, NCC itself does not directly imply flatness: the loss landscape can remain sharp even in the presence of perfect collapse. However, in the NC regime, the network structure enables flatness to emerge naturally. As training with the cross-entropy loss continues, the model can reduce the training loss further by increasing~$\lambda$, since the softmax probabilities satisfy $\hat{y}_c(x_c) \rightarrow 1$ and $\hat{y}_j(x_c) \rightarrow 0, j \neq c$ as $\lambda \rightarrow \infty$. This rescaling leaves the feature geometry (and hence NCC) unchanged but decreases the relative flatness measure, effectively leading to flatness without altering the collapsed representation. In summary, NC itself does not guarantee flatness, but in combination with CE loss training leads to flatness: it defines a feature geometry that allows the network to reach a flat minimum through continued optimization. Observing NC is thus a strong empirical indicator of eventual generalization, but not a necessary condition for it. Conversely, models can achieve generalization and flatness through other representational structures that do not exhibit collapse.

\section{Training Setups for NCC and Additional Experiments}
 \label{app:nc_sufficient}

We evaluate our method on the CIFAR-10 dataset using the standard training (50{,}000 samples) and test (10{,}000 samples) splits. Input images are normalized using the channel-wise mean and standard deviation of $(0.5, 0.5, 0.5)$ and $(0.5, 0.5, 0.5)$, respectively, as implemented in \texttt{transforms.Normalize}. No data augmentation is applied in the main experiments.

All models are trained using stochastic gradient descent (SGD) with a fixed learning rate of $0.01$, a batch size of $64$, and no weight decay. The regularization coefficient is set to $\lambda = 10^{-3}$. Momentum is $0.9$. Training is performed for $250$ epochs without the use of a learning rate scheduler or early stopping.

\begin{figure}[ht]
\vskip 0.2in
\begin{center}
\centerline{\includegraphics[width=1.0\columnwidth]{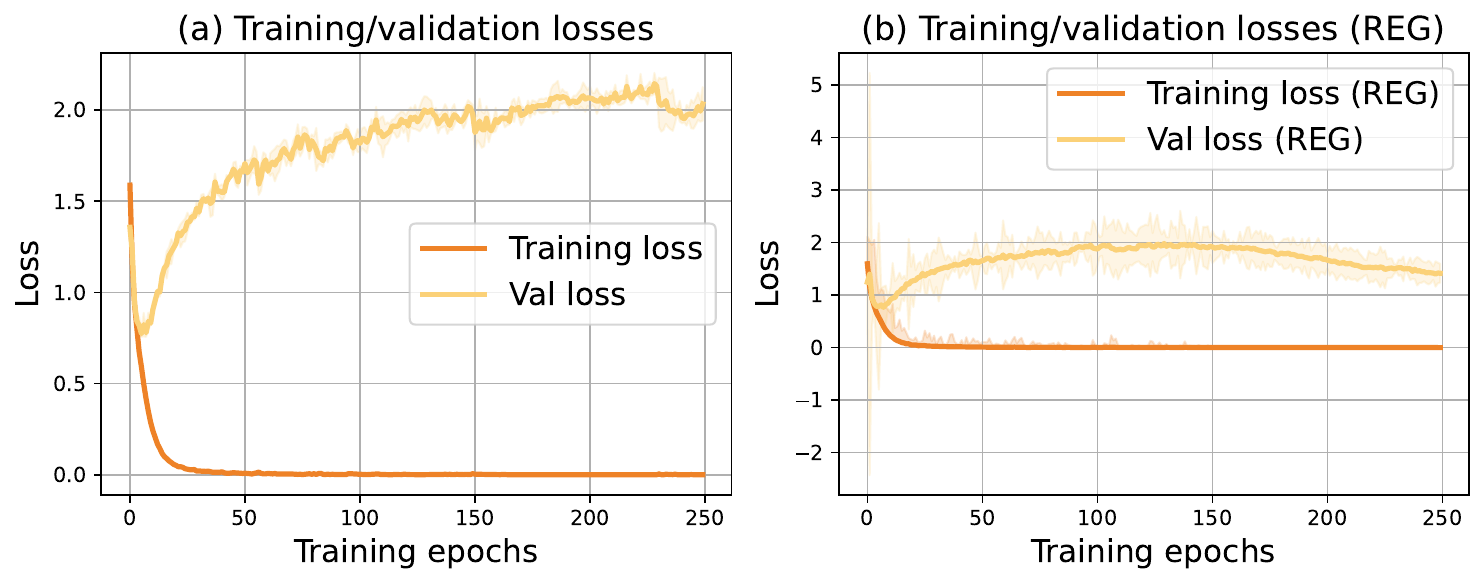}}
\caption{Results of training and validation losses in the NCC experiments.}
\label{fig:ncc_main_loss}
\end{center}
\vskip -0.2in
\end{figure}

\begin{figure}[ht]
\vskip 0.2in
\begin{center}
\centerline{\includegraphics[width=1.0\columnwidth]{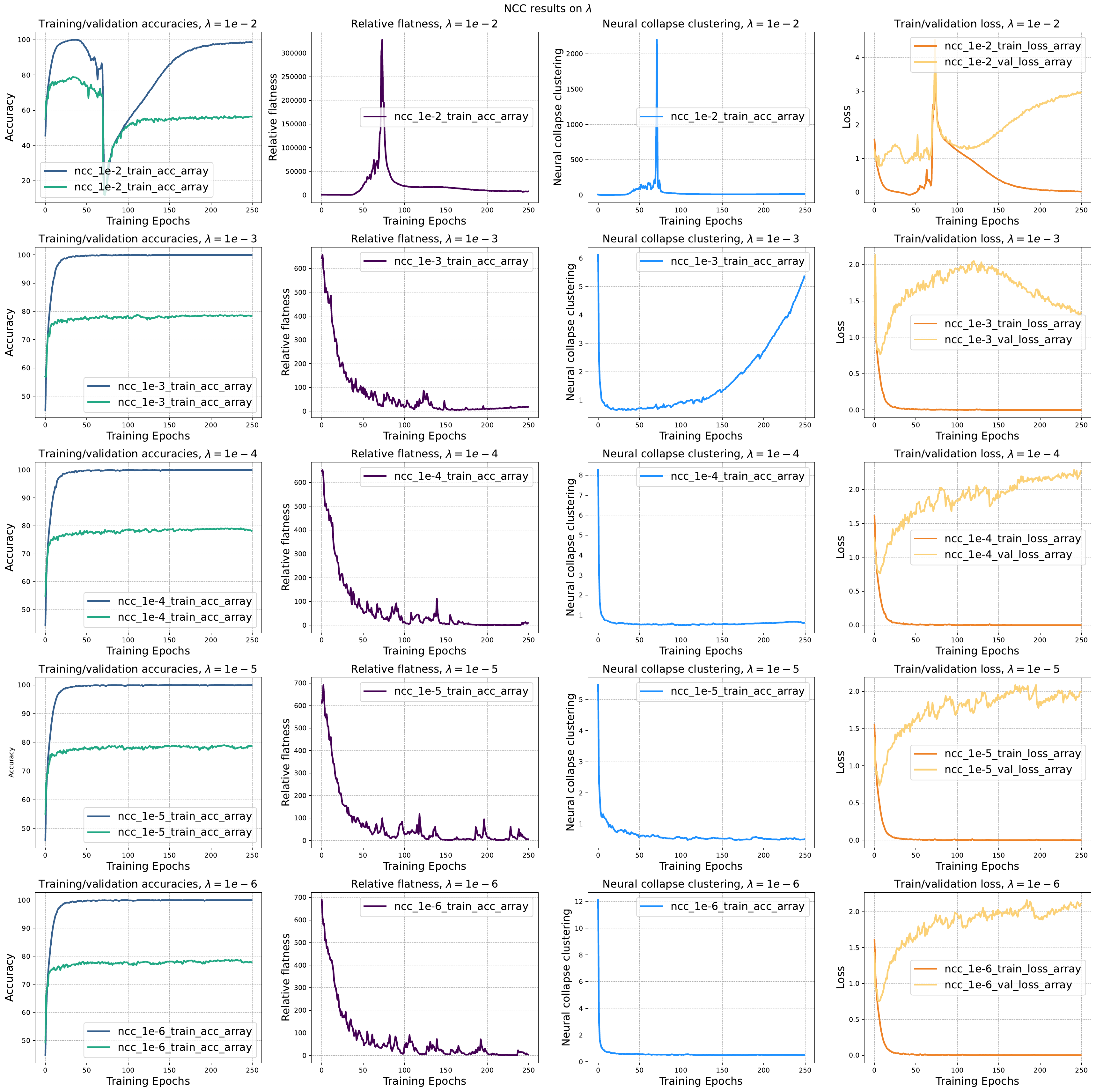}}
\caption{Results of different $\lambda$ values for the NCC regularizer. Each row contains training/validation accuracies, measure of relative flatness, NCC measurement and training/validation losses for a particular $\lambda$ value.}
\label{fig:nc_more_experiments}
\end{center}
\vskip -0.2in
\end{figure}

\begin{figure}[htpb]
\vskip 0.2in
\begin{center}
\centerline{\includegraphics[width=1.0\columnwidth]{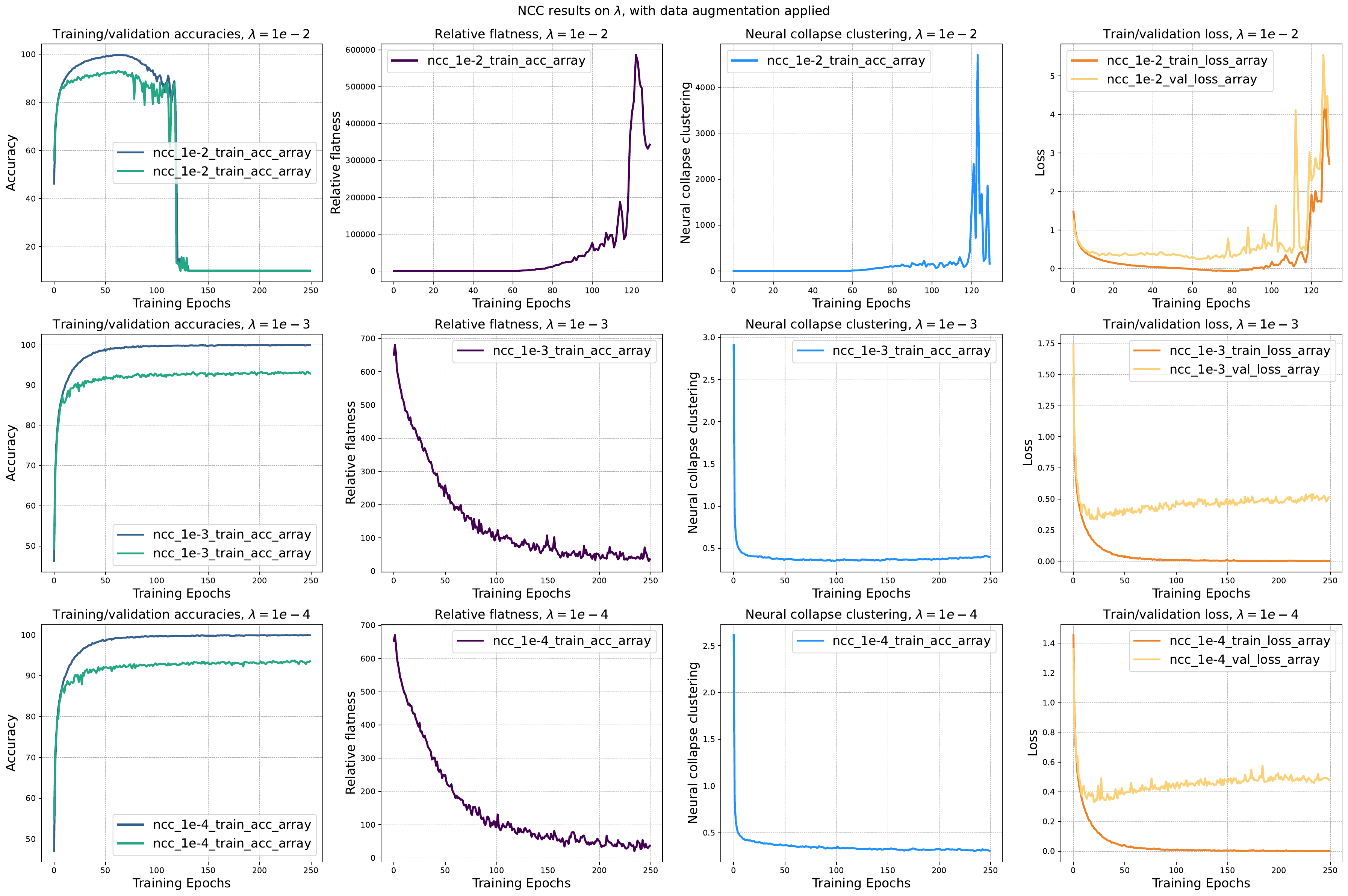}}
\caption{Results of different $\lambda$ values for the NCC regularizer when data augmentation is applied. Each row contains training/validation accuracies, measure of relative flatness, NCC measurement and training/validation losses for a particular $\lambda$ value.}
\label{fig:nc_more_experiments_da}
\end{center}
\vskip -0.2in
\end{figure}

We use the ResNet-18 architecture as implemented in \texttt{torchvision} version 0.19.1, without any modifications. In particular, we retain the original $7 \times 7$ kernel size in the first convolutional layer and the following max pooling layer, despite the smaller resolution of CIFAR-10 images.

All results in the main experiments are reported as the average over three independent runs with random seeds $1$, $42$, and $15213$ to account for variability due to random initialization. Experiments are conducted using PyTorch 2.4.1 on a single NVIDIA A100 GPU with $80$GB of memory. Training each model for 250 epochs takes approximately 10.5 hours.

The loss corresponding to Figure \ref{fig:nc_necessary1} is presented in Figure \ref{fig:ncc_main_loss}.

\paragraph{Effect of Different $\lambda$ Values of NCC Regularizer}  
In our main experiments, we apply $\lambda = 1 \times 10^{-3}$ using three random seeds. To study the effect of different $\lambda$ values, we conduct a hyperparameter search experiment using a range of $\lambda = \{1 \times 10^{-2}, 1 \times 10^{-3}, 1 \times 10^{-4}, 1 \times 10^{-5}, 1 \times 10^{-6}\}$, evaluated with Seed 42 due to computational constraints. We have verified that results with a single seed remain consistent with the trends observed using multiple seeds.

The results are shown in Figure~\ref{fig:nc_more_experiments}, which plots accuracy, loss, relative flatness, and neural collapse clustering (NCC) metrics. We observe that a large $\lambda$ value (e.g., $1 \times 10^{-2}$) causes training instability: although training accuracy reaches nearly 100\% at the end of training, validation accuracy remains below 60\%, and both training and validation accuracies collapse to around 10\% around epoch 70.

Conversely, when $\lambda$ is smaller than $1 \times 10^{-3}$, the NCC regularizer has little to no effect on the training, as evidenced by negligible differences in performance metrics and NCC values compared to the baseline. When $\lambda$ is set to $1 \times 10^{-3}$, the measure of relative flatness remains low while the NCC metric increases. This supports our argument that neural collapse clustering is not necessary for generalization, as good generalization performance can still be achieved in the absence of strong NCC behavior.

\paragraph{Effect of Data Augmentation in NCC} 
To investigate the effect of data augmentation on NCC regularization, we apply standard transformations—\textit{RandomCrop(32, padding=4)} and \textit{RandomHorizontalFlip()}—and conduct experiments using $\lambda \in \{10^{-2}, 10^{-3}, 10^{-4}\}$. All inputs are normalized using per-channel means and standard deviations of $(0.4914, 0.4822, 0.4465)$ and $(0.2023, 0.1994, 0.2010)$, respectively.

We limit the range of $\lambda$ to these three values because prior results indicate that $\lambda = 10^{-4}$ has negligible effect; smaller values would therefore be redundant. Due to limited resources, we use seed 42 for this experiment, and based on consistent trends observed across seeds in other experiments, we find this sufficient for reliable comparison.

The results, shown in Figure~\ref{fig:nc_more_experiments_da}, indicate that with $\lambda = 10^{-2}$, training becomes unstable and performance significantly deteriorates. With $\lambda = 10^{-3}$ and $10^{-4}$, data augmentation prevents the NCC metric from increasing as it does in the non-augmented setting—suggesting that NCC regularization is less effective when strong implicit regularization is already applied. In particular, $\lambda = 10^{-3}$, which was effective without augmentation (low measure of relative flatness and increased NCC), does not produce the same NCC increase under augmentation. At $\lambda = 10^{-4}$, the NCC metric remains low in both augmented and non-augmented cases.

In these experiments, we also adopt a ResNet-18 variant tailored for CIFAR-10, following prior work: the first convolutional layer is modified to use a $3 \times 3$ kernel, and the initial max pooling layer is removed to better suit the smaller input resolution. The model with augmentation achieves higher validation accuracy (94\%) compared to the non-augmented case (78\%).

\newpage

\section{Training Setups for Regularizing Relative Flatness and Experimental Results}
\label{app:cifar10_grokking}

\subsection{Aanalysis of Relative Flatness Regularizer and Gradients}
\label{app:gradient_rf}

As stated in Section~\ref{sec:flatness_necessary}, the loss employed to regularize relative flatness is as follows:
\begin{equation*}
\mathcal{L}_{\mathrm{RF\_REG}} = \mathcal{L}_{\mathrm{CE}} - \lambda \cdot {\kappa^\phi_{\mathrm{Tr}}(\mathbf{w})}.
\end{equation*}

Let a single training example be $(x^{(i)}, y^{(i)})$ with label $y^{(i)} \in \{1,\dots,C\}$.
The penultimate representation is $\phi^{(i)} \in \mathbb{R}^d$, 
and the linear classifier has parameters $W \in \mathbb{R}^{C \times d}$, $b \in \mathbb{R}^C$. 
The logits and softmax probabilities are
\[
z^{(i)} = W \phi^{(i)} + b, 
\qquad
p_c^{(i)} = \frac{\exp(z_c^{(i)})}{\sum_{j=1}^C \exp(z_j^{(i)})}, \quad c=1,\dots,C.
\]
The per-sample cross-entropy loss is
\[
\mathcal{L}_{\mathrm{CE}}^{(i)} = - \log p^{(i)}_{y^{(i)}}.
\]

Based on the derivation in \cite{walter2024uncanny}, the per-sample relative flatness with cross-entropy loss is
\[
\kappa^{\phi}_{\mathrm{Tr}}{}^{(i)}(\mathbf{w}) = \|W\|_F^2 \, \sum_{c=1}^C p_c^{(i)} \left(1 - p_c^{(i)}\right) \, \|\phi^{(i)}\|_2^2,
\]
and the total per-sample loss is:
\begin{align}
\mathcal{L}_{\mathrm{RF\_REG}}^{(i)}
&= - \log p^{(i)}_{y^{(i)}} 
   - \lambda \,\|W\|_F^2 \left( \sum_{c=1}^C p_c^{(i)} \left( 1 - p_c^{(i)} \right) \right) \|\phi^{(i)}\|_2^2 \\
&= - \log p^{(i)}_{y^{(i)}} 
   - \lambda \,\|W\|_F^2 \left( \sum_{c=1}^C p_c^{(i)} - \sum_{c=1}^C \left(p_c^{(i)}\right)^2 \right) \|\phi^{(i)}\|_2^2 \\
&= - \log p^{(i)}_{y^{(i)}} 
   - \lambda \,\|W\|_F^2 \left( 1 - \|p^{(i)}\|_2^2 \right) \|\phi^{(i)}\|_2^2 .
\end{align}

For simplicity, let $U(p^{(i)}) = 1 - \|p^{(i)}\|_2^2$, where $0 \le U(p^{(i)}) \le 1 - \tfrac{1}{C}$. 
Then $R^{(i)} = \|W\|_F^2 \, U(p^{(i)}) \, \|\phi^{(i)}\|_2^2$. 
The per-sample loss can be re-written as 
\[
\mathcal{L}_{\mathrm{RF\_REG}}^{(i)} 
= - \log p^{(i)}_{y^{(i)}} - \lambda R^{(i)}.
\]

To encourage sharper minima, the optimizer tends to increase $\|W\|$ and $\|\phi\|$ under high uncertainty $U(p^{(i)})$. On the contrary, the cross-entropy loss aims to minimize prediction uncertainty, which conflicts with the objective of the regularizer. When $\lambda$ is set to a small value, the cross-entropy loss quickly dominates the training, driving uncertainty down and effectively nullifying the influence of the regularizer. However, when $\lambda$ is large, this effect can dominate, causing parameter growth that eventually destabilizes training. 
To better understand this, we first look at the gradients with respect to the regularizer. 

\paragraph{Gradients.}
Write \(S=\|\phi^{(i)}\|_2^2\) and \(U=1-\|p^{(i)}\|_2^2\). Then
\[
\frac{\partial R^{(i)}}{\partial W}
= 2\,U\,S\,W \;+\; \|W\|_F^{2}\, S \left(\frac{\partial U}{\partial z}\right) {\phi^{(i)}}^{\!\top},
\]
\[
\frac{\partial R^{(i)}}{\partial \phi}
= \|W\|_F^{2}\!\left[\, S\, W^{\!\top}\!\left(\frac{\partial U}{\partial z}\right)^{\!\top}
\;+\; 2U\,\phi^{(i)} \right].
\]

Thus the overall gradients are
\[
\frac{\partial \mathcal{L}^{(i)}}{\partial W}
= \frac{\partial \mathcal{L}^{(i)}_{\mathrm{CE}}}{\partial W}
- \lambda\,\frac{\partial R^{(i)}}{\partial W},
\qquad
\frac{\partial \mathcal{L}^{(i)}}{\partial \phi}
= \frac{\partial \mathcal{L}^{(i)}_{\mathrm{CE}}}{\partial \phi}
- \lambda\,\frac{\partial R^{(i)}}{\partial \phi}.
\]

Without constraints, the multiplicative term \(2USW\) amplifies \(\|W\|\) whenever predictions are uncertain (\(U\) large) 
and features have large norm (\(S\) large), which may cause runaway weight growth and unstable training. 

\[
W_{t+1}
= W_t
- \eta \left.\frac{\partial \mathcal L^{(i)}_{\mathrm{CE}}}{\partial W}\right|_{t}
+ \eta\lambda\!\left(
  2U_t S_t\, W_t
  + \|W_t\|_F^2 S_t \left(\frac{\partial U}{\partial z}\right)_{t} \phi_t^{\top}
\right),
\]

\[
\phi_{t+1}
= \phi_t
- \eta \left.\frac{\partial \mathcal L^{(i)}_{\mathrm{CE}}}{\partial \phi}\right|_{t}
+ \eta\lambda \|W_t\|_F^2\!\left(
  S_t\, W_t^{\top}\!\left(\frac{\partial U}{\partial z}\right)_{t}^{\!\top}
  + 2U_t\,\phi_t
\right),
\]

\[
b_{t+1}
= b_t
+ \eta\lambda \|W_t\|_F^2 S_t \left(\frac{\partial U}{\partial z}\right)_{t}.
\]

\paragraph{Multiplicative growth on the classifier.}
The \(W\)-update includes the term \(2\eta\lambda U_t \|\phi_t\|_2^2 W_t\), which multiplies \(W_t\) directly 
and can drive growth of \(\|W_t\|\) when \(U_t\) and \(\|\phi_t\|\) are not small. 
This motivates stabilization strategies.

\paragraph{Stabilization by feature normalization and weight capping.}
If \(\|\phi_t\|_2 = 1\) is enforced (e.g., replacing \(\phi \leftarrow \phi/\|\phi\|\) in the forward pass),
then \(S = \|\phi_t\|_2^2 = 1\) and the amplifying factors involving \(\|\phi_t\|\) vanish.
The multiplicative \(W\)-term reduces to \(2\eta\lambda U_t W_t\) (still present but weaker),
and the \(\phi\)-update no longer scales with \(\|\phi_t\|\). 
In particular, normalizing $\phi$ fixes $S=\|\phi\|_2^2=1$, eliminating the quadratic amplification in $\phi$ 
that otherwise accelerates the multiplicative growth of $\|W\|$.

To further prevent runaway growth of \(\|W_t\|\), we apply a Frobenius norm cap after each optimizer step:
\[
W_{t+1} \;\leftarrow\; \Pi_{\mathcal{B}_F}(W_{t+1}), 
\qquad 
\mathcal{B}_F = \{\, W : \|W\|_F \leq B_F \,\}.
\]
This projection makes the update non-expansive in the Frobenius norm and uniformly bounds all
regularizer-induced gradient terms, since they scale with \(\|W_t\|\) and \(\|W_t\|^2\). 
Together, these techniques ensure that training under the relative flatness regularizer remains stable and well-conditioned.

\subsection{CIFAR-10 \& Resnet-18 Training Setups and Additional Experiments}
\label{app:cifar10_resnet18}

We follow the same training setup described in Section~\ref{app:nc_sufficient}, with two modifications: (1) the total number of training epochs is increased to $400$; and (2) the coefficient of the relative-flatness regularizer is set to $0.01$. During regularized training, the weight capping value is fixed at $50$ and the penultimate representation is normalized ($\|\phi\|=1$). We introduce temperature parameter $\tau=2$ in the softmax when computing uncertainty $U(p^{(i)})$, so as to inflate the uncertainty signal and maintain the influence of the regularizer during training. When the regularizer is removed at Epoch $200$, weight capping remains active, but normalization of the penultimate layer is deactivated. The same procedure for weight capping and representation normalization in unplug experiments is applied to the other tasks as well. We also verify that setting the weight capping to $50$ and normalizing the penultimate representation does not affect training without the regularizer.  

The training and validation loss curves associated with Figure~\ref{fig:grokking_cifar10} are shown in Figure~\ref{fig:rf_main_loss}, and the corresponding relative-flatness measurements are reported in Figure~\ref{fig:rf_main_rf}.  

\begin{figure}[ht]
\vskip 0.2in
\begin{center}
\centerline{\includegraphics[width=1.0\columnwidth]{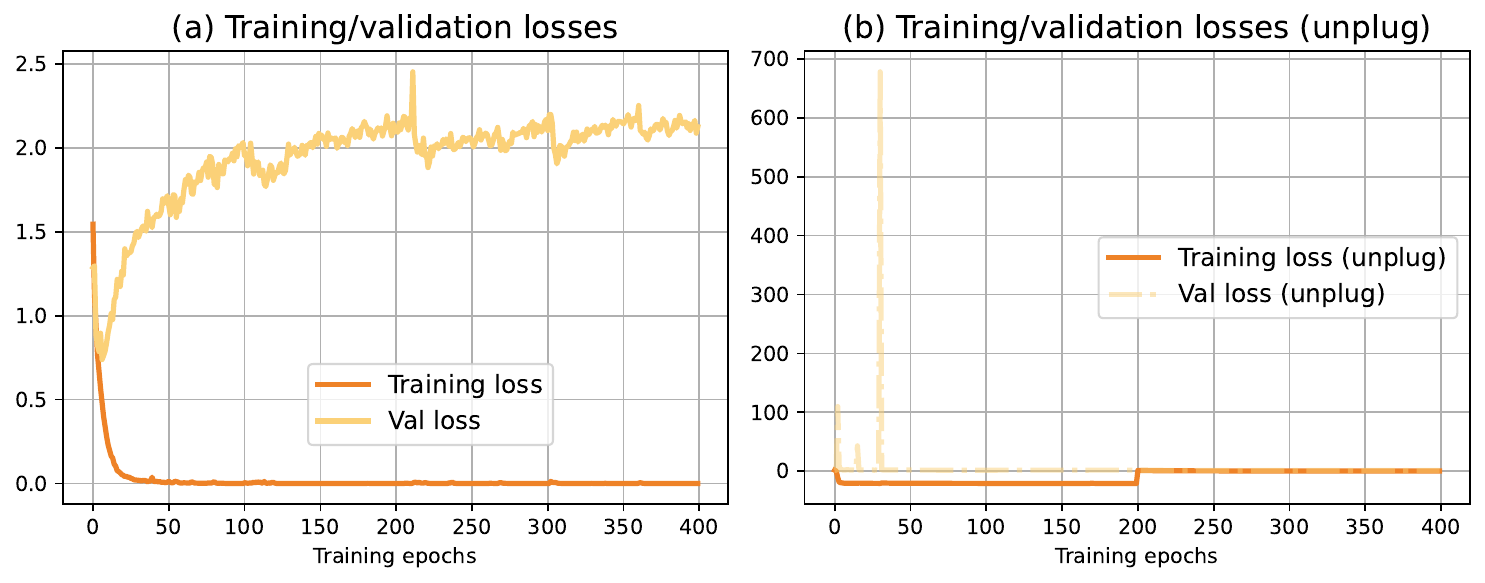}}
\caption{Results of training loss and validation loss of CIFAR-10 in the delayed generalization.}
\label{fig:rf_main_loss}
\end{center}
\vskip -0.2in
\end{figure}

\begin{figure}[ht]
\vskip 0.2in
\begin{center}
\centerline{\includegraphics[width=1.0\columnwidth]{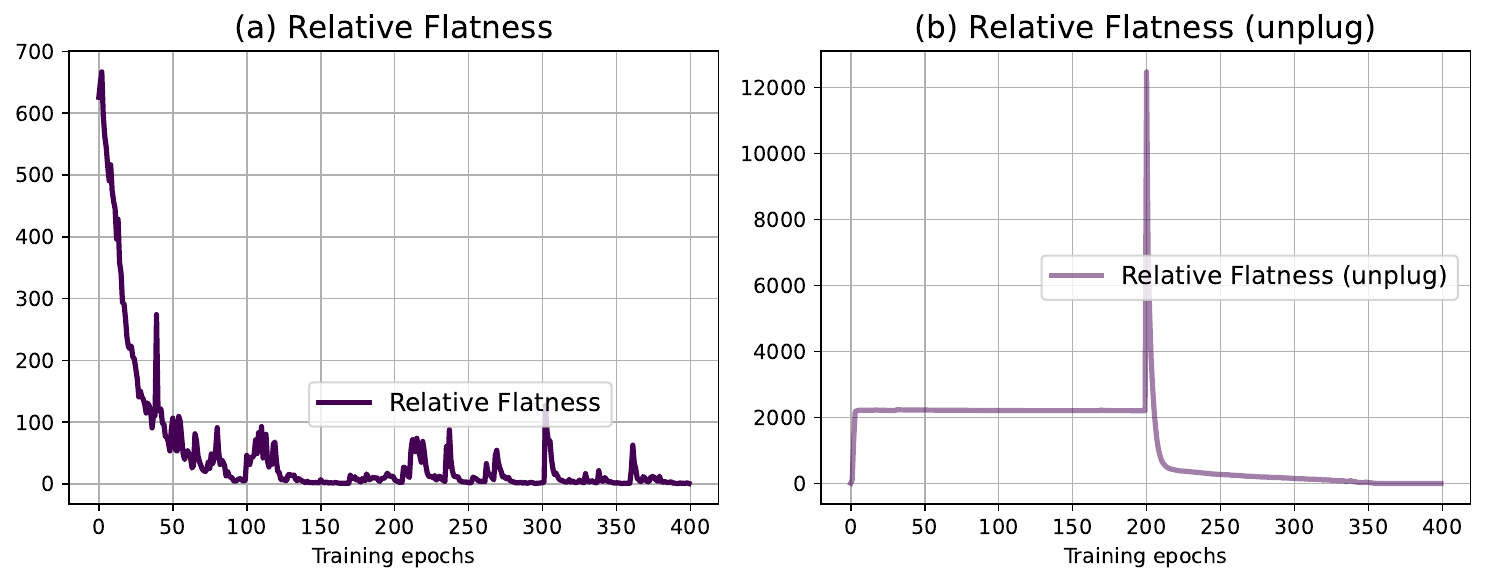}}
\caption{Results of relative flatness of CIFAR-10 in the delayed generalization.}
\label{fig:rf_main_rf}
\end{center}
\vskip -0.2in
\end{figure}

The main experiments are initialized with seed $42$, while additional results with seeds $1$ and $15213$ are shown in Figure~\ref{fig:cifar_seeds}. These results demonstrate that initialization with different random seeds can drive training toward different sharp minima. Consequently, averaging across seeds does not provide meaningful insights.  

In Figure~\ref{fig:cifar_seeds}, standard training with ResNet-18 converges reliably: training accuracy reaches nearly $100\%$, while validation accuracy stabilizes around $78\%$. The relative flatness measure decreases steadily, indicating convergence toward flatter minima. With the regularizer, training accuracy still approaches $100\%$, but validation accuracy drops to about $60\%$, well below the baseline. At the same time, the relative flatness remains high (around $2000$) and does not decrease, showing that the model is pushed into sharper solutions. These behaviors are consistent across seeds. The regularizer is removed at Epoch 200. After unplugging, the total training loss becomes positive small while the model continues to train stably. To encourage escape from the sharp minima, we reset the learning rate to $1 \times 10^{-1}$, apply a cosine annealing schedule, and introduce a weight decay of $2 \times 10^{-4}$. As shown in Figure~\ref{fig:grokking_cifar10}, validation accuracy rises toward $78\%$, aligning with the standard training baseline. The relative flatness (Figure~\ref{fig:rf_main_rf}) also decreases and becomes indistinguishable from the baseline, confirming that unplugging enables the model to recover from the sharp minima induced by the regularizer.

\begin{figure}[ht]
\vskip 0.2in
\begin{center}
\centerline{\includegraphics[width=1.0\columnwidth]{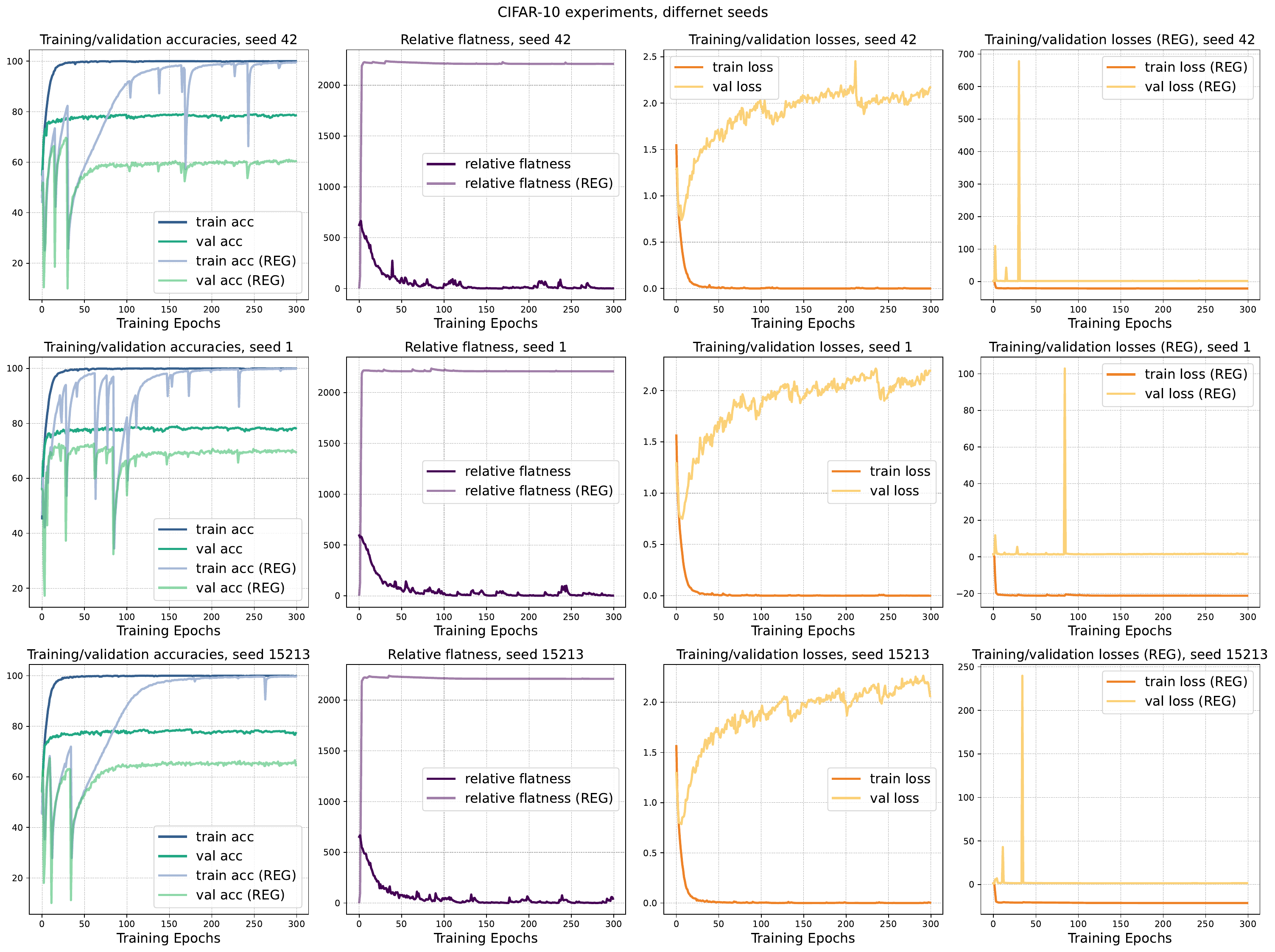}}
\caption{Resnet18 training results with different random seeds (42, 1, 15213), comparing standard fine-tuning and regularized training (REG).}
\label{fig:cifar_seeds}
\end{center}
\vskip -0.2in
\end{figure}

\paragraph{Effect of $\lambda$ Values of Relative Flatness Regularizer.} To investigate the effect of regularization strength, we conduct experiments with coefficients $\lambda \in \{0.5, 0.1, 10^{-3}, 10^{-4}\}$, while keeping all other settings (model architecture, training procedure, and random seed 42) consistent with the main experiments, except the number of training epochs is 300. Results are shown in Figure~\ref{fig:rf_lambda}.

For large regularization strengths ($\lambda = 0.5$ and $0.1$), training crashes, causing both training and validation accuracies to remain low. For a moderate value ($\lambda = 10^{-3}$), training accuracy reaches nearly $100\%$, and validation accuracy remains slightly below the $78\%$ baseline. Although the relative flatness stays high, the near-optimal validation accuracy indicates that an intermediate $\lambda$ can drive the network toward sharp minima without significantly degrading performance. For a smaller value ($\lambda = 10^{-4}$), training remains stable across 300 epochs. The relative flatness decreases rapidly to a low value, the training loss converges smoothly to zero, and the validation loss stabilizes. However, when $\lambda$ is set too small (below $10^{-4}$), the influence of the regularizer essentially disappears, and the model behaves like the baseline model without any affecting from relative-flatness regularization.

\begin{figure}[ht]
\vskip 0.2in
\begin{center}
\centerline{\includegraphics[width=1.0\columnwidth]{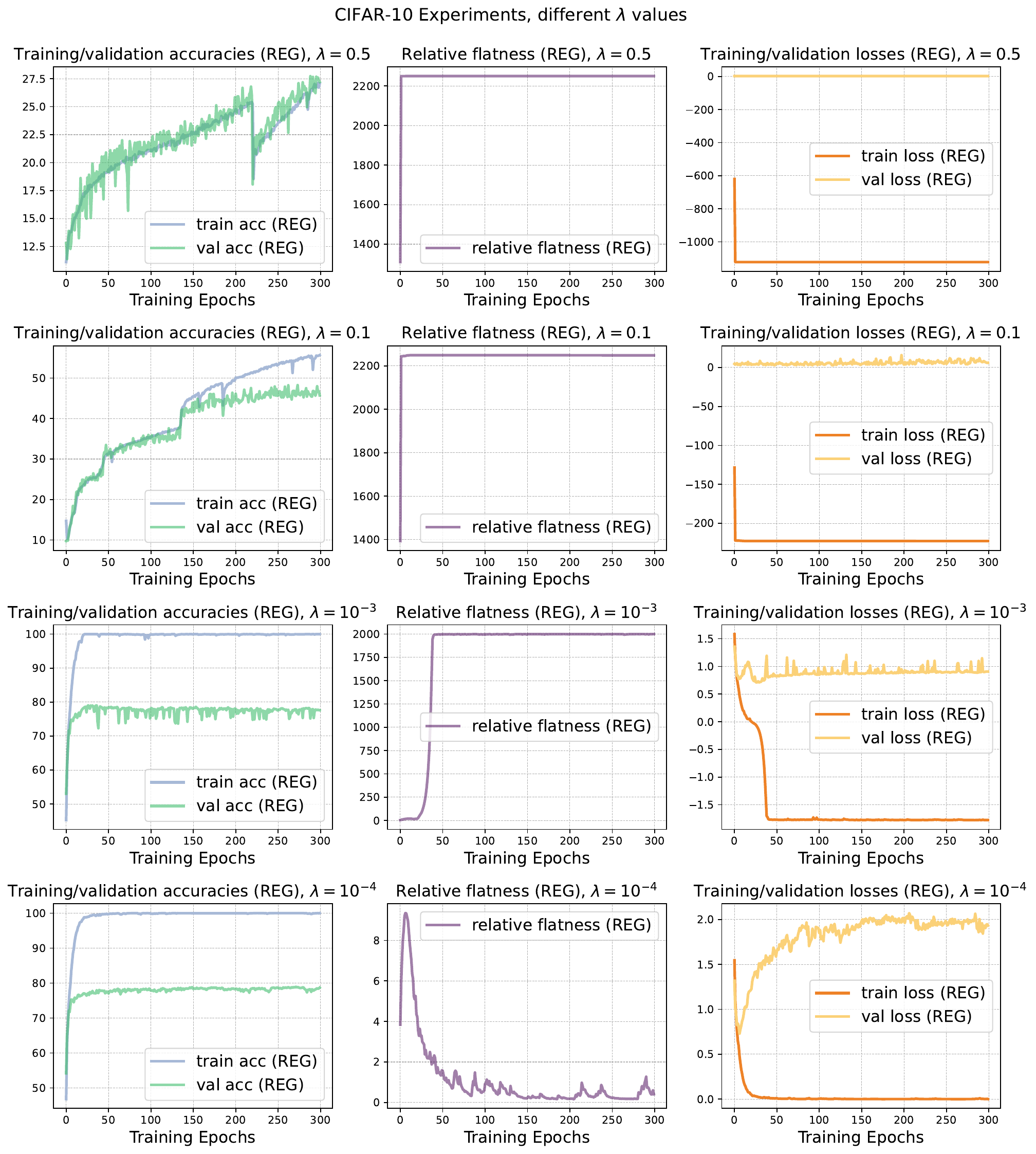}}
\caption{Results of various $\lambda$ values on CIFAR-10.}
\label{fig:rf_lambda}
\end{center}
\vskip -0.2in
\end{figure}

\paragraph{Effect of Weight Cap Values in the Relative-Flatness Regularizer.} 
We examine the effect of different weight cap values (WC $\in \{20, 100, 150, 200\}$) while keeping all other settings identical to the main experiments. Results are shown in Figure~\ref{fig:rf_weight_cap}.  

With a small cap ($\text{WC} = 20$), training remains stable: the relative flatness is still high but stays below $2000$, the level observed in the main experiment, and validation accuracy reaches about $75\%$, only $3\%$ lower than the baseline. At $\text{WC} = 100$, training appears superficially stable within 300 epochs, as validation accuracy plateaus around $50\%$ and training accuracy shows potential to approach $100\%$ with extended training. However, the relative flatness increases rapidly to nearly four times the level reported in Figure~\ref{fig:cifar_seeds}, raising concerns about whether training can safely and reliably reach full convergence. With larger caps ($\text{WC} = 150$ and $200$), training collapses entirely: both training and validation accuracies remain at very low levels, the relative flatness diverges to extremely large values (above $10^{5}$), and the losses oscillate with high variance, indicating a complete failure to converge.

\begin{figure}[htbp]
\vskip 0.2in
\begin{center}
\centerline{\includegraphics[width=1.0\columnwidth]{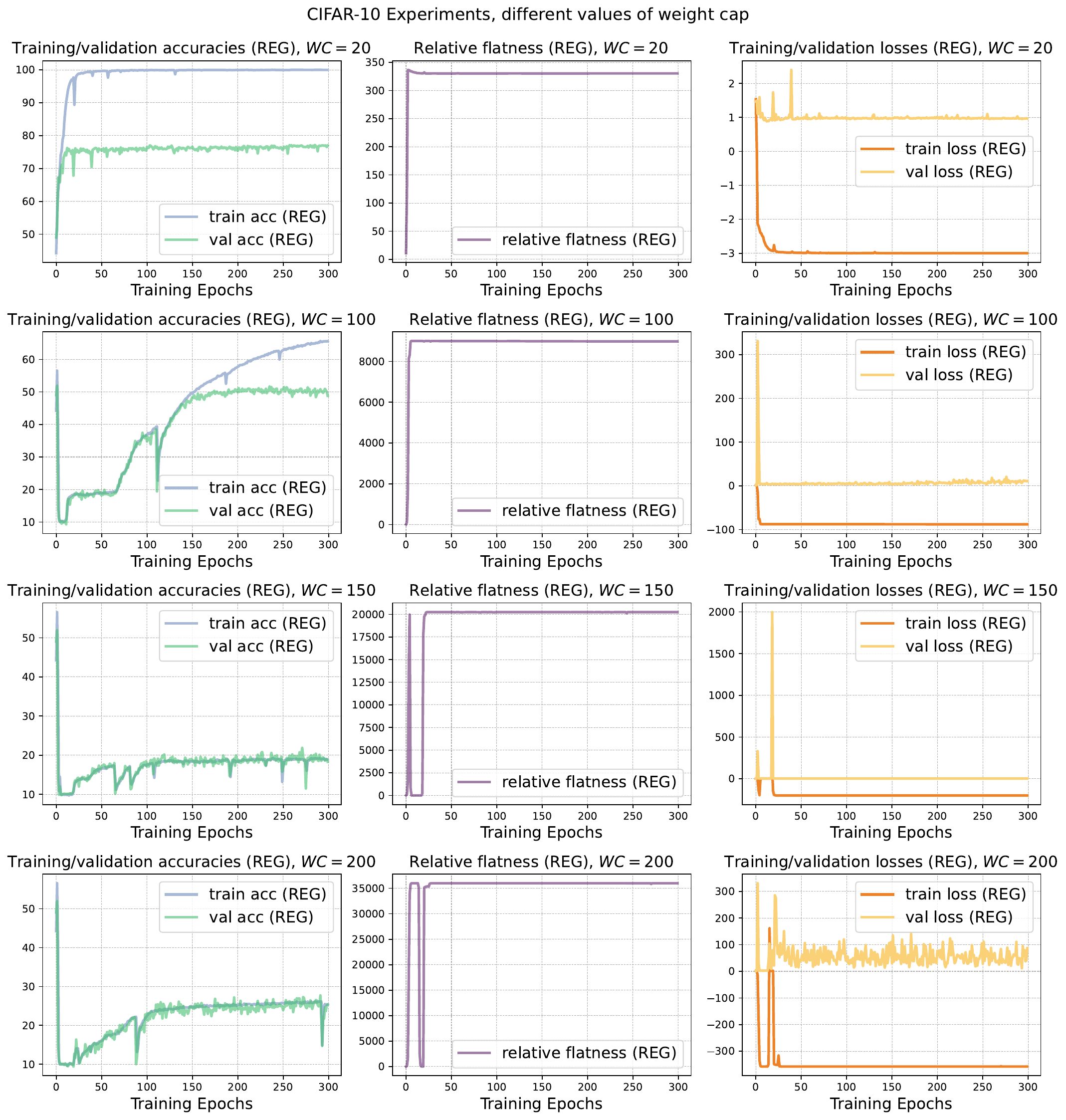}}
\caption{Results of various values of weight capping on CIFAR-10.}
\label{fig:rf_weight_cap}
\end{center}
\vskip -0.2in
\end{figure}

\newpage
\subsection{Imagenet-100 \& ViT Training Setups and Experimental Results}
\label{app:imagenet100_vit}

We evaluate the relative-flatness regularizer on the ImageNet-100 dataset using the standard train/test split. Input images are normalized with channel-wise means of [0.485, 0.456, 0.406] and standard deviations of [0.229, 0.224, 0.225] using \texttt{transforms.Normalize}. No data augmentation is applied in the main experiments.  

We train a vision transformer (ViT-tiny) model from scratch with \emph{all dropout components} disabled. This design excludes the generalization effects introduced by dropout. Dropout encourages smoother solutions and better generalization, whereas the relative-flatness regularizer drives the model toward sharper minima. Combining them would create conflicting objectives. The coefficient of the regularizer is set to $\lambda = 10^{-2}$. To induce delayed generalization, the regularizer is removed after epoch 150. We optimize with SGD using a fixed learning rate of 0.01, momentum of 0.9, no weight decay, and no Nesterov acceleration. The temperature $\tau$ is set to 2 when computing the uncertainty of the regularizer, and the weight capping is set to $150$. Training runs for 300 epochs without learning rate scheduling or early stopping. The batch size is 256, and experiments are conducted on a single NVIDIA A100 GPU with 80\,GB of memory.  

The corresponding ImageNet-100 loss curve associated with Figure~\ref{fig:grokking_cifar10} is shown in Figure~\ref{fig:rf_main_loss_vit}, and the relative flatness measurement is shown in Figure~\ref{fig:rf_main_rf_vit}. Figure~\ref{fig:vit_seeds} presents the results under three random seeds (42, 1, 15213), comparing normal training with regularized training (REG).  

\begin{figure}[ht]
\vskip 0.2in
\begin{center}
\centerline{\includegraphics[width=1.0\columnwidth]{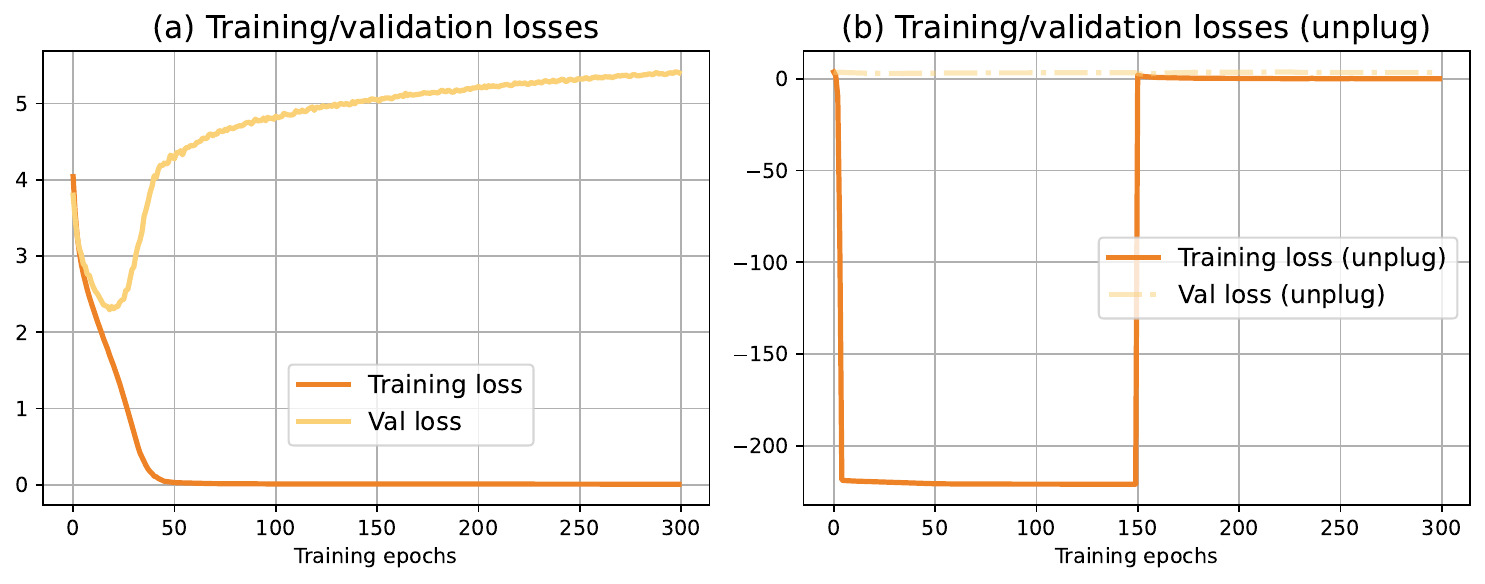}}
\caption{Results of training loss and validation loss of Imagenet-100 in the delayed generalization.}
\label{fig:rf_main_loss_vit}
\end{center}
\vskip -0.2in
\end{figure}

\begin{figure}[ht]
\vskip 0.2in
\begin{center}
\centerline{\includegraphics[width=1.0\columnwidth]{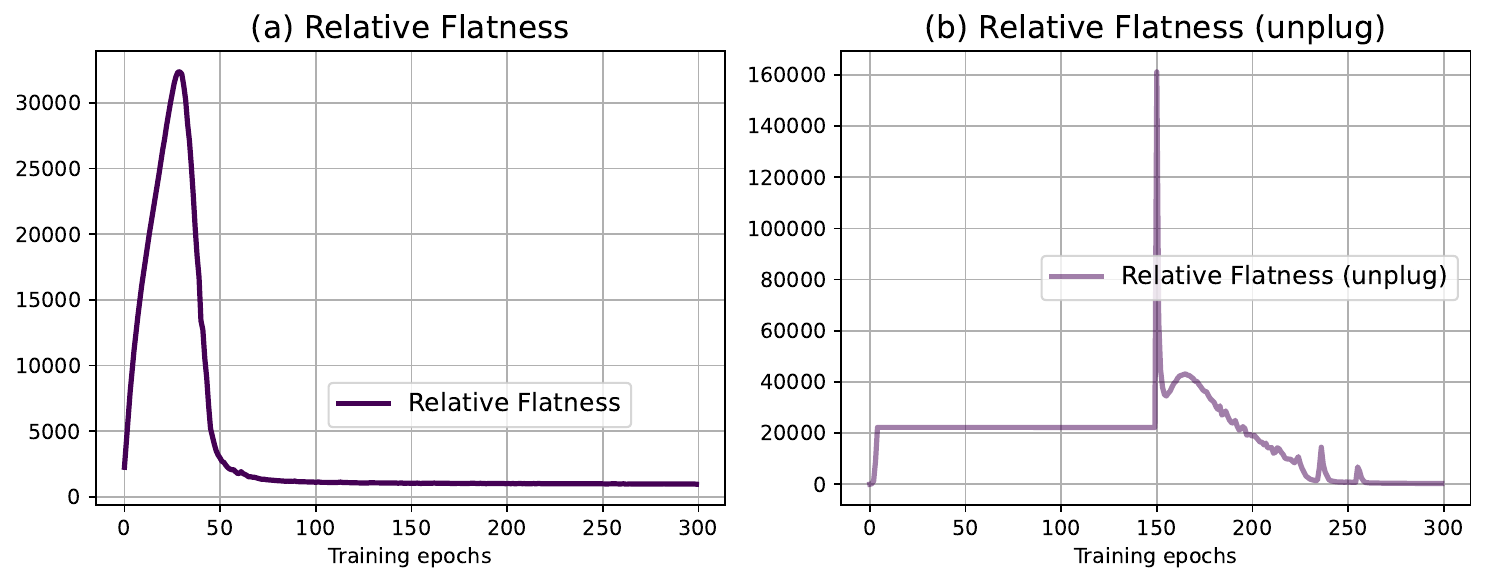}}
\caption{Results of relative flatness of Imagenet-100 in the delayed generalization.}
\label{fig:rf_main_rf_vit}
\end{center}
\vskip -0.2in
\end{figure}

In Figure~\ref{fig:vit_seeds}, standard training converges reliably: training accuracy reaches nearly $100\%$, while validation accuracy stabilizes around $42\%$. The relative flatness measure decreases steadily, indicating convergence toward flat minima. With the regularizer, training accuracy still approaches $100\%$, but validation accuracy drops to about $38\%$, below the baseline. At the same time, the relative flatness remains very large (above $20{,}000$) and does not decrease, showing that the model is pushed into sharp solutions. These behaviors are consistent across seeds. The regularizer is removed at Epoch 150. After unplugging, the total training loss becomes small but positive while the model continues to train stably. To encourage escape from the sharp minima, we reset the learning rate to $5 \times 10^{-2}$, apply a cosine annealing schedule, and introduce a weight decay of $2 \times 10^{-4}$. As shown in Figure~\ref{fig:grokking_cifar10}, validation accuracy rises toward $42\%$, aligning with the standard training baseline. The relative flatness (Figure~\ref{fig:rf_main_rf_vit}) also decreases and becomes indistinguishable from the baseline, confirming that unplugging helps the model recover from the sharp minima induced by the regularizer.

\begin{figure}[ht]
\vskip 0.2in
\begin{center}
\centerline{\includegraphics[width=1.0\columnwidth]{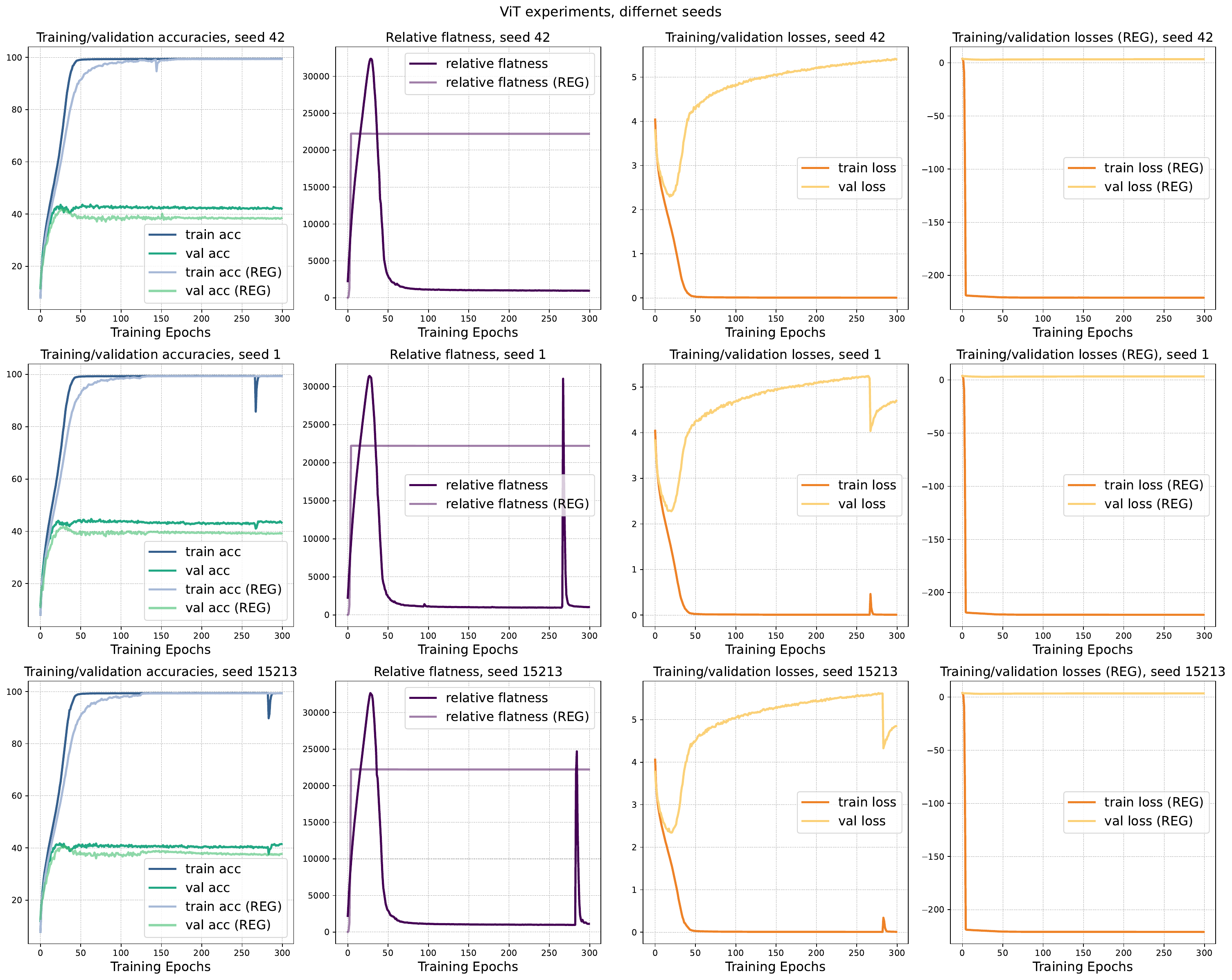}}
\caption{ViT training results with different random seeds (42, 1, 15213), comparing standard fine-tuning and regularized training (REG)}
\label{fig:vit_seeds}
\end{center}
\vskip -0.2in
\end{figure}

\subsection{SST-5 \& Pretrained Language Models Training Setups and Experimental Results}
\label{app:sst_gpt_bert}

We evaluate the proposed relative-flatness regularizer on fine-tuning pretrained language models TinyBERT \citep{jiao2019tinybert} and DistilGPT2 \citep{sanh2019distilbert} on the SST-5 dataset. The training hyperparameters are listed in Table~\ref{tab:nlp_hyperparams}. In these experiments, \emph{all dropout components} in both models (including attention, feedforward, embedding, and classifier head dropouts) are explicitly disabled. Optimization is performed using stochastic gradient descent (SGD) with a fixed learning rate of $10^{-3}$, momentum of 0.9, no weight decay, and no Nesterov acceleration. Training runs for 300 epochs without learning rate scheduling or early stopping. Figures~\ref{fig:bert_seeds} and \ref{fig:gpt_seeds} report the results under three random seeds (42, 1, 15213), comparing standard fine-tuning with regularized training (REG). Similar to the results on CIFAR-10 and ImageNet-100, training converges to slightly different minima under different random seeds, which is reflected by minor differences among validation accuracies. Figures~\ref{fig:bert_unplug} and \ref{fig:gpt_unplug} present the unplug results with random seed $42$.

\begin{table}[ht]
\centering
\caption{Hyperparameter settings for SST-5 fine-tuning experiments. All dropout components (attention, feedforward, embedding, and classifier head) are disabled to isolate the effect of the relative-flatness regularizer.}
\label{tab:nlp_hyperparams}
\begin{tabular}{lcc}
\toprule
Hyperparameter & TinyBERT & DistilGPT2 \\
\midrule
Learning rate          & $10^{-3}$  & $10^{-3}$ \\
Batch size             & 32                  & 64 \\
Epochs                 & 300                 & 300 \\
Random seeds           & \{42, 1, 15213\}    & \{42, 1, 15213\} \\
Weight capping         & 30                 & 30 \\
Regularization strength $\lambda$ & $3 \times 10^{-1}$ & $3 \times 10^{-2}$ \\
Temperature $\tau$   & 2.0 & 2.0\\
Optimizer              & SGD              & SGD \\
\bottomrule
\end{tabular}
\end{table}

For TinyBERT (Figure~\ref{fig:bert_seeds}), standard fine-tuning converges reliably: the training accuracy approaches $100\%$, and the validation accuracy stabilizes around $44\%$. The relative flatness measure decreases steadily, indicating convergence to relatively flat minima. In contrast, with the regularizer, training accuracy still reaches $100\%$, but validation accuracy drops to about $38\%$, below the $44\%$ baseline. The relative flatness measure remains high (above $700$) rather than decreasing, showing that the model is driven to sharp solutions by the regularizer. These patterns are consistent across seeds.  

\begin{figure}[ht]
\vskip 0.2in
\begin{center}
\centerline{\includegraphics[width=1.0\columnwidth]{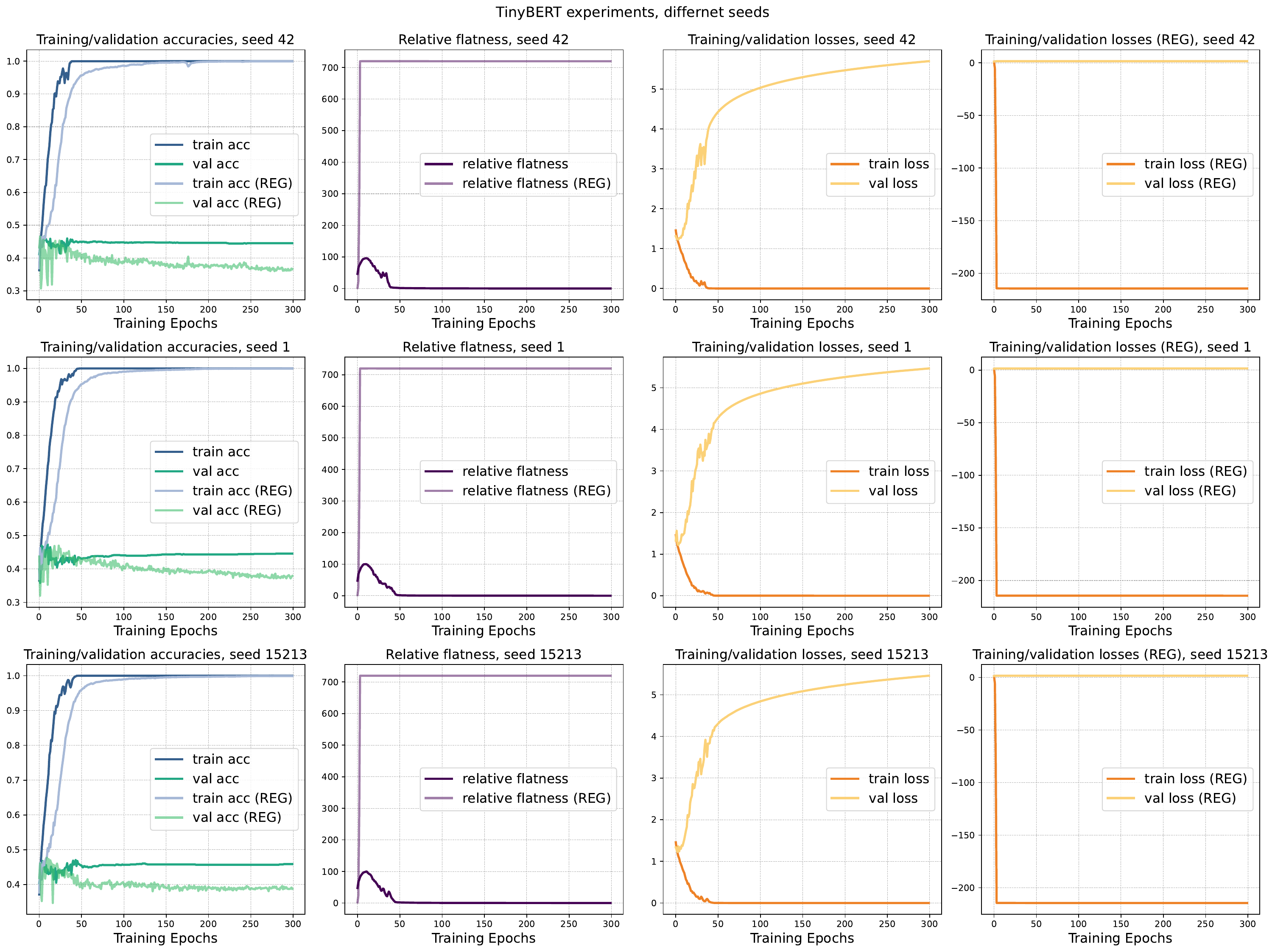}}
\caption{TinyBERT training results with different random seeds (42, 1, 15213), comparing standard fine-tuning and regularized training (REG)}
\label{fig:bert_seeds}
\end{center}
\vskip -0.2in
\end{figure}

For the unplug experiment in Figure~\ref{fig:bert_unplug}, the regularizer is removed at Epoch 150. After unplugging, the total training loss is positively small while training remains stable in the sharp minima. To encourage the network to escape from this sharp minima, the learning rate is reset to $3 \times 10^{-3}$, a cosine annealing scheduler is applied, and a weight decay of $2 \times 10^{-4}$ is introduced. As shown, the validation accuracy rises to around $44\%$, close to the standard fine-tuning result. Similarly, the relative flatness decreases and becomes indistinguishable from the baseline case.

\begin{figure}[ht]
\vskip 0.2in
\begin{center}
\centerline{\includegraphics[width=1.0\columnwidth]{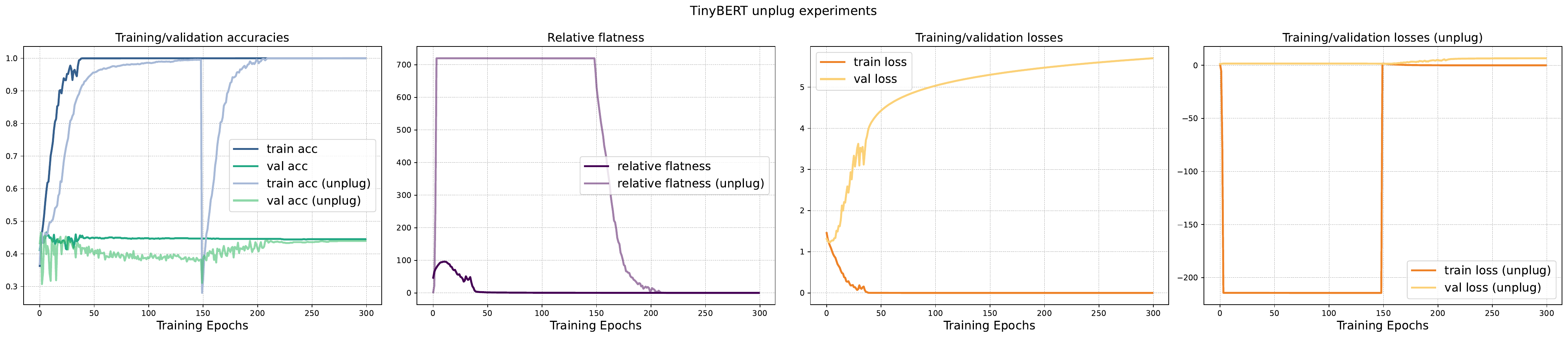}}
\caption{TinyBERT unplug experiment with random seed 42, showing the effect of removing the regularizer during training.}
\label{fig:bert_unplug}
\end{center}
\vskip -0.2in
\end{figure}

For DistilGPT2 (Figure~\ref{fig:gpt_seeds}), standard fine-tuning shows stable convergence: training accuracy reaches close to $100\%$, while validation accuracy levels off around $45\%$. The relative flatness decreases steadily during training, indicating convergence toward flatter minima. With the regularizer, training accuracy still achieves $100\%$, but validation accuracy drops to about $37\%$, below the $45\%$ baseline. In this case, the relative flatness remains high (above $600$) and does not decrease, which indicates that the model is pushed into sharper solutions. These outcomes are consistent across all three random seeds.  

\begin{figure}[ht]
\vskip 0.2in
\begin{center}
\centerline{\includegraphics[width=1.0\columnwidth]{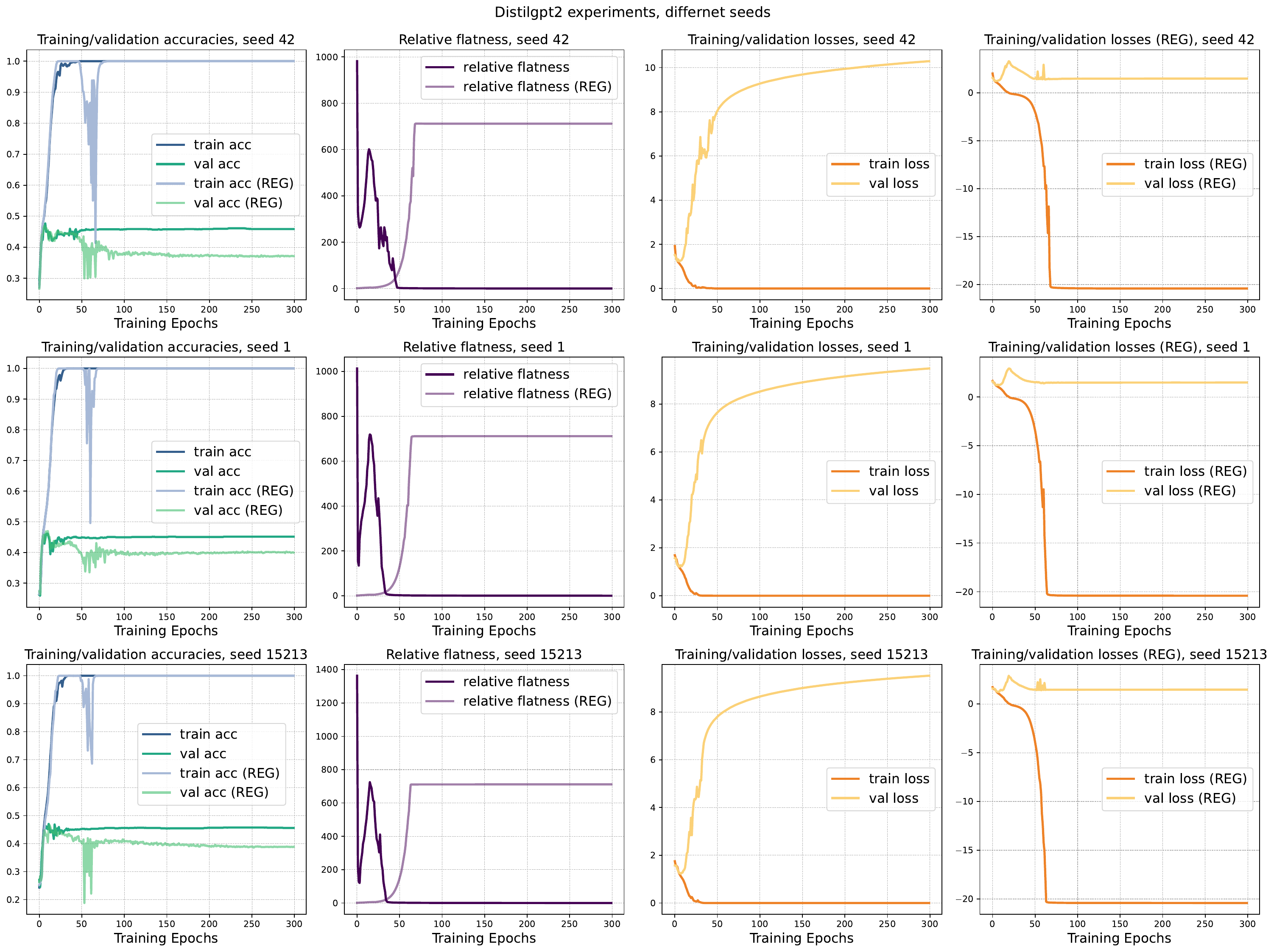}}
\caption{DistilGPT2 training results with different random seeds (42, 1, 15213), comparing standard fine-tuning and regularized training (REG).}
\label{fig:gpt_seeds}
\end{center}
\vskip -0.2in
\end{figure}

\begin{figure}[ht]
\vskip 0.2in
\begin{center}
\centerline{\includegraphics[width=1.0\columnwidth]{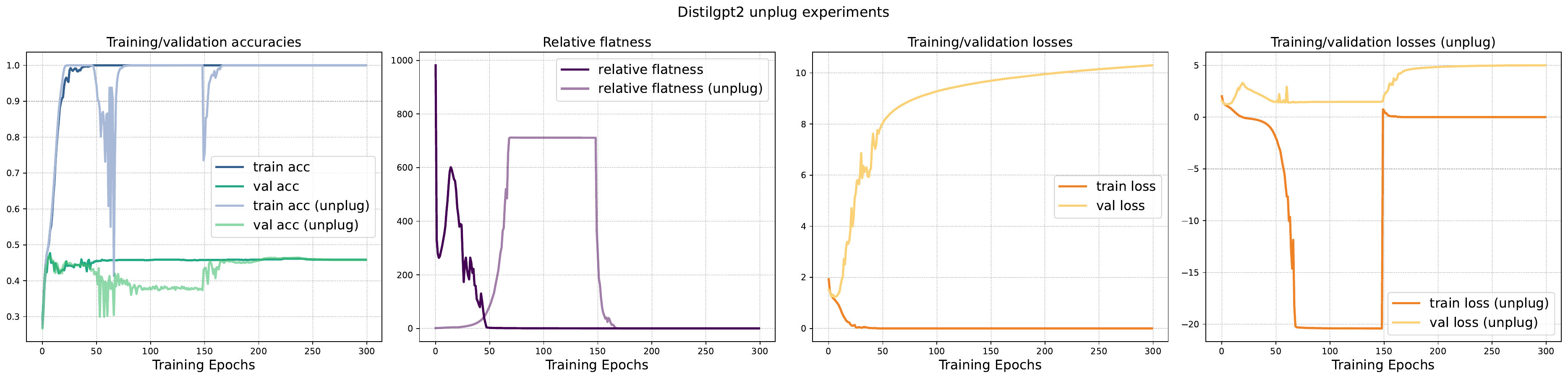}}
\caption{DistilGPT2 unplug experiment with random seed 42, showing the effect of removing the regularizer during training.}
\label{fig:gpt_unplug}
\end{center}
\vskip -0.2in
\end{figure}

For the unplug experiment in Figure~\ref{fig:gpt_unplug}, the regularizer is removed at Epoch 150. Training continues smoothly, and the total training loss remains low as a positive value even in the sharp minima. To promote recovery toward flatter minima, we reset the learning rate to $5 \times 10^{-3}$, apply a cosine annealing schedule, and add a weight decay of $1 \times 10^{-4}$. After this adjustment, validation accuracy rises to about $45\%$, nearly the same as the standard fine-tuning result. The relative flatness also decreases quickly and becomes indistinguishable from the baseline case, showing that unplugging allows the model to escape the sharp minima induced by the regularizer.

\section{Representativeness}
\label{app:representativeness}
As described in \cite{petzka2021relative}, representativeness is computed based on the representations from the penultimate layer of the network. Following their procedure, we use kernel density estimation (KDE) to measure representativeness (for further details, please refer to the original paper).

We employ a Gaussian kernel with a bandwidth of 1.0, and assign each training sample a weight of 0.02, chosen empirically. Consistent with the setup of the main experiments (Figure~\ref{fig:grok_nc_flatness}), all computations are averaged over three random seeds. The results are presented in Figure~\ref{fig:repr_grok}.

As shown in the results, representativeness decreases to zero when the validation accuracy reaches 100\%. Moreover, we observe that improvements in representativeness coincide with the initial rise in validation accuracy during training, providing empirical support for our theoretical findings and arguments.

\begin{figure}[ht]
\vskip 0.2in
\begin{center}
\centerline{\includegraphics[width=0.9\columnwidth]{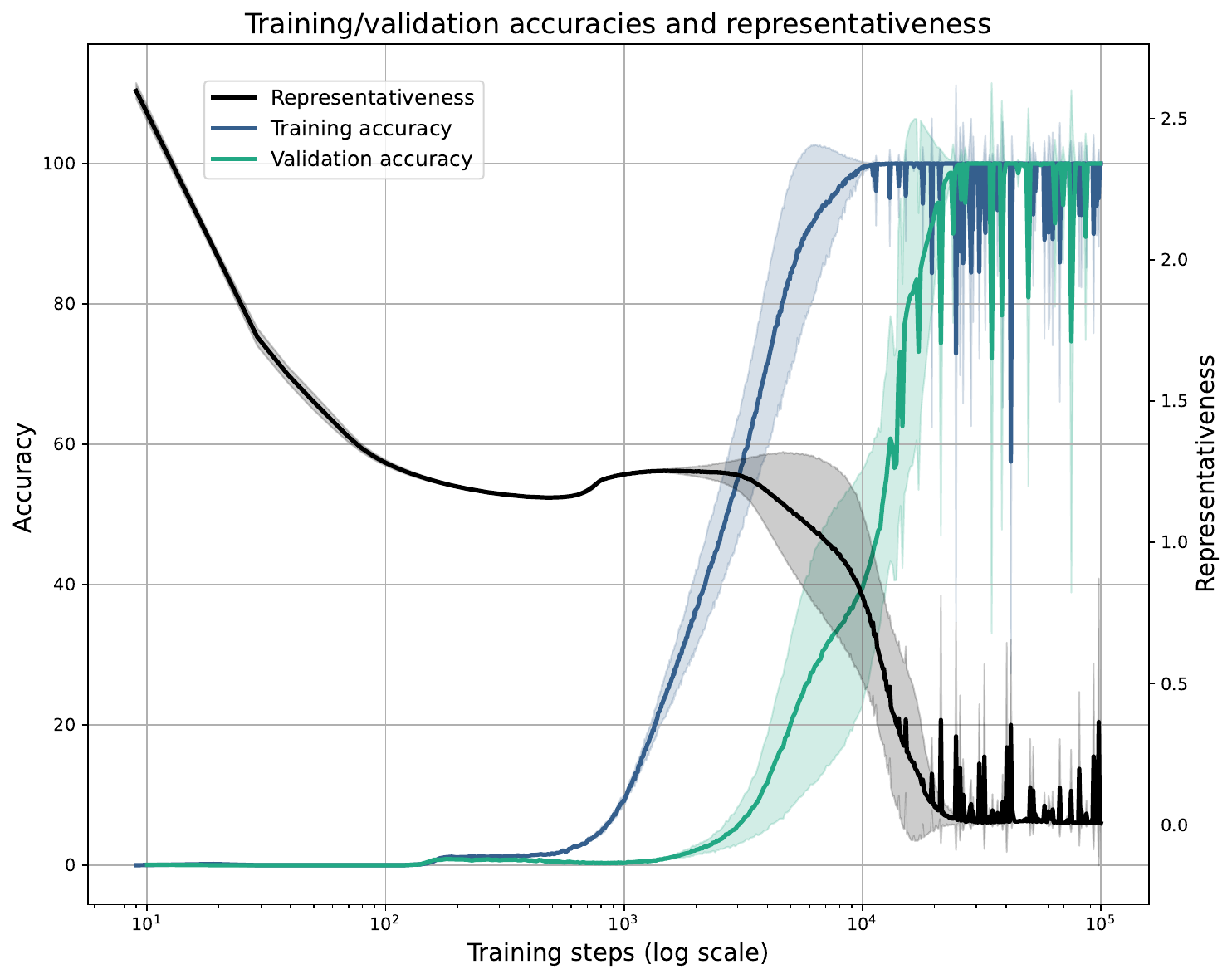}}
\caption{Results of representativenesss and training/validation accuracies in grokking. }
\label{fig:repr_grok}
\end{center}
\vskip -0.2in
\end{figure}

\newpage
\section*{NeurIPS Paper Checklist}

\begin{enumerate}

\item {\bf Claims}
    \item[] Question: Do the main claims made in the abstract and introduction accurately reflect the paper's contributions and scope?
    \item[] Answer: \answerYes{} 
    \item[] Justification: Please refer to Section~\ref{sec:nc_sufficient}, Section~\ref{sec:flatness_necessary} and Appendix for detailed information.
    \item[] Guidelines:
    \begin{itemize}
        \item The answer NA means that the abstract and introduction do not include the claims made in the paper.
        \item The abstract and/or introduction should clearly state the claims made, including the contributions made in the paper and important assumptions and limitations. A No or NA answer to this question will not be perceived well by the reviewers. 
        \item The claims made should match theoretical and experimental results, and reflect how much the results can be expected to generalize to other settings. 
        \item It is fine to include aspirational goals as motivation as long as it is clear that these goals are not attained by the paper. 
    \end{itemize}

\item {\bf Limitations}
    \item[] Question: Does the paper discuss the limitations of the work performed by the authors?
    \item[] Answer: \answerYes{} 
    \item[] Justification: We have discussions in Section~\ref{sec:discussion}.
    \item[] Guidelines:
    \begin{itemize}
        \item The answer NA means that the paper has no limitation while the answer No means that the paper has limitations, but those are not discussed in the paper. 
        \item The authors are encouraged to create a separate "Limitations" section in their paper.
        \item The paper should point out any strong assumptions and how robust the results are to violations of these assumptions (e.g., independence assumptions, noiseless settings, model well-specification, asymptotic approximations only holding locally). The authors should reflect on how these assumptions might be violated in practice and what the implications would be.
        \item The authors should reflect on the scope of the claims made, e.g., if the approach was only tested on a few datasets or with a few runs. In general, empirical results often depend on implicit assumptions, which should be articulated.
        \item The authors should reflect on the factors that influence the performance of the approach. For example, a facial recognition algorithm may perform poorly when image resolution is low or images are taken in low lighting. Or a speech-to-text system might not be used reliably to provide closed captions for online lectures because it fails to handle technical jargon.
        \item The authors should discuss the computational efficiency of the proposed algorithms and how they scale with dataset size.
        \item If applicable, the authors should discuss possible limitations of their approach to address problems of privacy and fairness.
        \item While the authors might fear that complete honesty about limitations might be used by reviewers as grounds for rejection, a worse outcome might be that reviewers discover limitations that aren't acknowledged in the paper. The authors should use their best judgment and recognize that individual actions in favor of transparency play an important role in developing norms that preserve the integrity of the community. Reviewers will be specifically instructed to not penalize honesty concerning limitations.
    \end{itemize}

\item {\bf Theory assumptions and proofs}
    \item[] Question: For each theoretical result, does the paper provide the full set of assumptions and a complete (and correct) proof?
    \item[] Answer: \answerYes{} 
    \item[] Justification: All the related proofs are in Section~\ref{sec:preliminaries}, Section~\ref{sec:nc_sufficient}, and Appendix~\ref{app:proof}
    \item[] Guidelines:
    \begin{itemize}
        \item The answer NA means that the paper does not include theoretical results. 
        \item All the theorems, formulas, and proofs in the paper should be numbered and cross-referenced.
        \item All assumptions should be clearly stated or referenced in the statement of any theorems.
        \item The proofs can either appear in the main paper or the supplemental material, but if they appear in the supplemental material, the authors are encouraged to provide a short proof sketch to provide intuition. 
        \item Inversely, any informal proof provided in the core of the paper should be complemented by formal proofs provided in appendix or supplemental material.
        \item Theorems and Lemmas that the proof relies upon should be properly referenced. 
    \end{itemize}

    \item {\bf Experimental result reproducibility}
    \item[] Question: Does the paper fully disclose all the information needed to reproduce the main experimental results of the paper to the extent that it affects the main claims and/or conclusions of the paper (regardless of whether the code and data are provided or not)?
    \item[] Answer: \answerYes{} 
    \item[] Justification: We include detailed training setups in Appendix and provide access to the code for reproducibility.
    \item[] Guidelines:
    \begin{itemize}
        \item The answer NA means that the paper does not include experiments.
        \item If the paper includes experiments, a No answer to this question will not be perceived well by the reviewers: Making the paper reproducible is important, regardless of whether the code and data are provided or not.
        \item If the contribution is a dataset and/or model, the authors should describe the steps taken to make their results reproducible or verifiable. 
        \item Depending on the contribution, reproducibility can be accomplished in various ways. For example, if the contribution is a novel architecture, describing the architecture fully might suffice, or if the contribution is a specific model and empirical evaluation, it may be necessary to either make it possible for others to replicate the model with the same dataset, or provide access to the model. In general. releasing code and data is often one good way to accomplish this, but reproducibility can also be provided via detailed instructions for how to replicate the results, access to a hosted model (e.g., in the case of a large language model), releasing of a model checkpoint, or other means that are appropriate to the research performed.
        \item While NeurIPS does not require releasing code, the conference does require all submissions to provide some reasonable avenue for reproducibility, which may depend on the nature of the contribution. For example
        \begin{enumerate}
            \item If the contribution is primarily a new algorithm, the paper should make it clear how to reproduce that algorithm.
            \item If the contribution is primarily a new model architecture, the paper should describe the architecture clearly and fully.
            \item If the contribution is a new model (e.g., a large language model), then there should either be a way to access this model for reproducing the results or a way to reproduce the model (e.g., with an open-source dataset or instructions for how to construct the dataset).
            \item We recognize that reproducibility may be tricky in some cases, in which case authors are welcome to describe the particular way they provide for reproducibility. In the case of closed-source models, it may be that access to the model is limited in some way (e.g., to registered users), but it should be possible for other researchers to have some path to reproducing or verifying the results.
        \end{enumerate}
    \end{itemize}

\item {\bf Open access to data and code}
    \item[] Question: Does the paper provide open access to the data and code, with sufficient instructions to faithfully reproduce the main experimental results, as described in supplemental material?
    \item[] Answer: \answerYes{} 
    \item[] Justification: Our code is provided on the public link in the paper.
    \item[] Guidelines:
    \begin{itemize}
        \item The answer NA means that paper does not include experiments requiring code.
        \item Please see the NeurIPS code and data submission guidelines (\url{https://nips.cc/public/guides/CodeSubmissionPolicy}) for more details.
        \item While we encourage the release of code and data, we understand that this might not be possible, so “No” is an acceptable answer. Papers cannot be rejected simply for not including code, unless this is central to the contribution (e.g., for a new open-source benchmark).
        \item The instructions should contain the exact command and environment needed to run to reproduce the results. See the NeurIPS code and data submission guidelines (\url{https://nips.cc/public/guides/CodeSubmissionPolicy}) for more details.
        \item The authors should provide instructions on data access and preparation, including how to access the raw data, preprocessed data, intermediate data, and generated data, etc.
        \item The authors should provide scripts to reproduce all experimental results for the new proposed method and baselines. If only a subset of experiments are reproducible, they should state which ones are omitted from the script and why.
        \item At submission time, to preserve anonymity, the authors should release anonymized versions (if applicable).
        \item Providing as much information as possible in supplemental material (appended to the paper) is recommended, but including URLs to data and code is permitted.
    \end{itemize}

\item {\bf Experimental setting/details}
    \item[] Question: Does the paper specify all the training and test details (e.g., data splits, hyperparameters, how they were chosen, type of optimizer, etc.) necessary to understand the results?
    \item[] Answer: \answerYes{} 
    \item[] Justification: We include detailed training setups in Appendix.
    \item[] Guidelines:
    \begin{itemize}
        \item The answer NA means that the paper does not include experiments.
        \item The experimental setting should be presented in the core of the paper to a level of detail that is necessary to appreciate the results and make sense of them.
        \item The full details can be provided either with the code, in appendix, or as supplemental material.
    \end{itemize}

\item {\bf Experiment statistical significance}
    \item[] Question: Does the paper report error bars suitably and correctly defined or other appropriate information about the statistical significance of the experiments?
    \item[] Answer: \answerYes{} 
    \item[] Justification: Our experiments are performed with three random seeds and we display the means and divergence on the plots.
    \item[] Guidelines:
    \begin{itemize}
        \item The answer NA means that the paper does not include experiments.
        \item The authors should answer "Yes" if the results are accompanied by error bars, confidence intervals, or statistical significance tests, at least for the experiments that support the main claims of the paper.
        \item The factors of variability that the error bars are capturing should be clearly stated (for example, train/test split, initialization, random drawing of some parameter, or overall run with given experimental conditions).
        \item The method for calculating the error bars should be explained (closed form formula, call to a library function, bootstrap, etc.)
        \item The assumptions made should be given (e.g., Normally distributed errors).
        \item It should be clear whether the error bar is the standard deviation or the standard error of the mean.
        \item It is OK to report 1-sigma error bars, but one should state it. The authors should preferably report a 2-sigma error bar than state that they have a 96\% CI, if the hypothesis of Normality of errors is not verified.
        \item For asymmetric distributions, the authors should be careful not to show in tables or figures symmetric error bars that would yield results that are out of range (e.g. negative error rates).
        \item If error bars are reported in tables or plots, The authors should explain in the text how they were calculated and reference the corresponding figures or tables in the text.
    \end{itemize}

\item {\bf Experiments compute resources}
    \item[] Question: For each experiment, does the paper provide sufficient information on the computer resources (type of compute workers, memory, time of execution) needed to reproduce the experiments?
    \item[] Answer: \answerYes{} 
    \item[] Justification: We present the information in Appendix.
    \item[] Guidelines: 
    \begin{itemize}
        \item The answer NA means that the paper does not include experiments.
        \item The paper should indicate the type of compute workers CPU or GPU, internal cluster, or cloud provider, including relevant memory and storage.
        \item The paper should provide the amount of compute required for each of the individual experimental runs as well as estimate the total compute. 
        \item The paper should disclose whether the full research project required more compute than the experiments reported in the paper (e.g., preliminary or failed experiments that didn't make it into the paper). 
    \end{itemize}
    
\item {\bf Code of ethics}
    \item[] Question: Does the research conducted in the paper conform, in every respect, with the NeurIPS Code of Ethics \url{https://neurips.cc/public/EthicsGuidelines}?
    \item[] Answer: \answerYes{} 
    \item[] Justification: Our research does not involve any human subjects or datasets.
    \item[] Guidelines:
    \begin{itemize}
        \item The answer NA means that the authors have not reviewed the NeurIPS Code of Ethics.
        \item If the authors answer No, they should explain the special circumstances that require a deviation from the Code of Ethics.
        \item The authors should make sure to preserve anonymity (e.g., if there is a special consideration due to laws or regulations in their jurisdiction).
    \end{itemize}

\item {\bf Broader impacts}
    \item[] Question: Does the paper discuss both potential positive societal impacts and negative societal impacts of the work performed?
    \item[] Answer: \answerNA{} 
    \item[] Justification: Our work is foundational research on generalization, and thus is not directly tied to any negative applications.
    \item[] Guidelines:
    \begin{itemize}
        \item The answer NA means that there is no societal impact of the work performed.
        \item If the authors answer NA or No, they should explain why their work has no societal impact or why the paper does not address societal impact.
        \item Examples of negative societal impacts include potential malicious or unintended uses (e.g., disinformation, generating fake profiles, surveillance), fairness considerations (e.g., deployment of technologies that could make decisions that unfairly impact specific groups), privacy considerations, and security considerations.
        \item The conference expects that many papers will be foundational research and not tied to particular applications, let alone deployments. However, if there is a direct path to any negative applications, the authors should point it out. For example, it is legitimate to point out that an improvement in the quality of generative models could be used to generate deepfakes for disinformation. On the other hand, it is not needed to point out that a generic algorithm for optimizing neural networks could enable people to train models that generate Deepfakes faster.
        \item The authors should consider possible harms that could arise when the technology is being used as intended and functioning correctly, harms that could arise when the technology is being used as intended but gives incorrect results, and harms following from (intentional or unintentional) misuse of the technology.
        \item If there are negative societal impacts, the authors could also discuss possible mitigation strategies (e.g., gated release of models, providing defenses in addition to attacks, mechanisms for monitoring misuse, mechanisms to monitor how a system learns from feedback over time, improving the efficiency and accessibility of ML).
    \end{itemize}
    
\item {\bf Safeguards}
    \item[] Question: Does the paper describe safeguards that have been put in place for responsible release of data or models that have a high risk for misuse (e.g., pretrained language models, image generators, or scraped datasets)?
    \item[] Answer: \answerNA{} 
    \item[] Justification: Our work poses no such risks.
    \item[] Guidelines:
    \begin{itemize}
        \item The answer NA means that the paper poses no such risks.
        \item Released models that have a high risk for misuse or dual-use should be released with necessary safeguards to allow for controlled use of the model, for example by requiring that users adhere to usage guidelines or restrictions to access the model or implementing safety filters. 
        \item Datasets that have been scraped from the Internet could pose safety risks. The authors should describe how they avoided releasing unsafe images.
        \item We recognize that providing effective safeguards is challenging, and many papers do not require this, but we encourage authors to take this into account and make a best faith effort.
    \end{itemize}

\item {\bf Licenses for existing assets}
    \item[] Question: Are the creators or original owners of assets (e.g., code, data, models), used in the paper, properly credited and are the license and terms of use explicitly mentioned and properly respected?
    \item[] Answer: \answerNA{} 
    \item[] Justification: We do not use any existing assets.
    \item[] Guidelines:
    \begin{itemize}
        \item The answer NA means that the paper does not use existing assets.
        \item The authors should cite the original paper that produced the code package or dataset.
        \item The authors should state which version of the asset is used and, if possible, include a URL.
        \item The name of the license (e.g., CC-BY 4.0) should be included for each asset.
        \item For scraped data from a particular source (e.g., website), the copyright and terms of service of that source should be provided.
        \item If assets are released, the license, copyright information, and terms of use in the package should be provided. For popular datasets, \url{paperswithcode.com/datasets} has curated licenses for some datasets. Their licensing guide can help determine the license of a dataset.
        \item For existing datasets that are re-packaged, both the original license and the license of the derived asset (if it has changed) should be provided.
        \item If this information is not available online, the authors are encouraged to reach out to the asset's creators.
    \end{itemize}

\item {\bf New assets}
    \item[] Question: Are new assets introduced in the paper well documented and is the documentation provided alongside the assets?
    \item[] Answer: \answerNA{} 
    \item[] Justification: We do not release new assets.
    \item[] Guidelines:
    \begin{itemize}
        \item The answer NA means that the paper does not release new assets.
        \item Researchers should communicate the details of the dataset/code/model as part of their submissions via structured templates. This includes details about training, license, limitations, etc. 
        \item The paper should discuss whether and how consent was obtained from people whose asset is used.
        \item At submission time, remember to anonymize your assets (if applicable). You can either create an anonymized URL or include an anonymized zip file.
    \end{itemize}

\item {\bf Crowdsourcing and research with human subjects}
    \item[] Question: For crowdsourcing experiments and research with human subjects, does the paper include the full text of instructions given to participants and screenshots, if applicable, as well as details about compensation (if any)? 
    \item[] Answer: \answerNA{} 
    \item[] Justification: Our work does not involve crowdsourcing nor research with human subjects.
    \item[] Guidelines:
    \begin{itemize}
        \item The answer NA means that the paper does not involve crowdsourcing nor research with human subjects.
        \item Including this information in the supplemental material is fine, but if the main contribution of the paper involves human subjects, then as much detail as possible should be included in the main paper. 
        \item According to the NeurIPS Code of Ethics, workers involved in data collection, curation, or other labor should be paid at least the minimum wage in the country of the data collector. 
    \end{itemize}

\item {\bf Institutional review board (IRB) approvals or equivalent for research with human subjects}
    \item[] Question: Does the paper describe potential risks incurred by study participants, whether such risks were disclosed to the subjects, and whether Institutional Review Board (IRB) approvals (or an equivalent approval/review based on the requirements of your country or institution) were obtained?
    \item[] Answer: \answerNA{} 
    \item[] Justification: Our work does not involve crowdsourcing nor research with human subjects.
    \item[] Guidelines:
    \begin{itemize}
        \item The answer NA means that the paper does not involve crowdsourcing nor research with human subjects.
        \item Depending on the country in which research is conducted, IRB approval (or equivalent) may be required for any human subjects research. If you obtained IRB approval, you should clearly state this in the paper. 
        \item We recognize that the procedures for this may vary significantly between institutions and locations, and we expect authors to adhere to the NeurIPS Code of Ethics and the guidelines for their institution. 
        \item For initial submissions, do not include any information that would break anonymity (if applicable), such as the institution conducting the review.
    \end{itemize}

\item {\bf Declaration of LLM usage}
    \item[] Question: Does the paper describe the usage of LLMs if it is an important, original, or non-standard component of the core methods in this research? Note that if the LLM is used only for writing, editing, or formatting purposes and does not impact the core methodology, scientific rigorousness, or originality of the research, declaration is not required.
    \item[] Answer: \answerNA{} 
    \item[] Justification: 
    \item[] Guidelines:
    \begin{itemize}
        \item The answer NA means that the core method development in this research does not involve LLMs as any important, original, or non-standard components.
        \item Please refer to our LLM policy (\url{https://neurips.cc/Conferences/2025/LLM}) for what should or should not be described.
    \end{itemize}

\end{enumerate}
\end{document}